\newtheorem{theorem}{Theorem}
\newtheorem{assumption}{Assumption}
  \providecommand\BibTeX{{%
    \normalfont B\kern-0.5em{\scshape i\kern-0.25em b}\kern-0.8em\TeX}}}
\begin{document}

\title{Towards Communication-efficient and Attack-Resistant Federated Edge Learning for Industrial Internet of Things}
\author{Yi Liu}
\orcid{0000-0002-0811-6150}
\affiliation{%
  \institution{Heilongjiang University and Tencent Jarvis Lab}
  \streetaddress{74 Xuefu Rd, Nangang District}
  \city{Harbin}
  \state{Heilongjiang}
  \country{China}
  \postcode{150080}}
  \email{97liuyi@ieee.org}

\author{Ruihui Zhao}
\affiliation{%
  \institution{Tencent Jarvis Lab}
  \streetaddress{10000 Shennan Avenue, Nanshan District}
  \city{Shenzhen}
  \country{China}}
\email{zacharyzhao@tencent.com}

\author{Jiawen Kang}
\affiliation{%
  \institution{Nanyang Technological University}
  \city{Singapore}
  \country{Singapore}
}
\email{kavinkang@ntu.edu.sg}

\author{Abdulsalam Yassine}
\affiliation{%
 \institution{Lakehead University}
 \streetaddress{P7B 5E1}
 \city{Ontario}
 \country{Canada}}
\email{ayassine@lakeheadu.ca}

\author{Dusit Niyato}
\affiliation{%
	\institution{Nanyang Technological University}
	\city{Singapore}
	\country{Singapore}
}
\email{dniyato@ntu.edu.sg}

\author{Jialiang Peng}
\authornote{Jialiang Peng is the corresponding author.}
\affiliation{%
  \institution{Heilongjiang University}
  \streetaddress{74 Xuefu Rd}
  \city{Harbin}
  \state{Heilongjiang}
  \country{China}}
\email{jialiangpeng@hlju.edu.cn}

\renewcommand{\shortauthors}{Yi Liu et al.}

\begin{abstract}
Federated Edge Learning (FEL) {allows} edge nodes to train a global deep learning model collaboratively for edge computing in the Industrial Internet of Things (IIoT), which significantly promotes the development of Industrial 4.0. However, FEL faces two critical challenges: communication overhead and {data privacy}. FEL suffers from expensive communication overhead when training {large-scale multi-node models}. Furthermore, due to the vulnerability of FEL to gradient leakage and label-flipping attacks, the training process of the global model is easily compromised by adversaries. To address these challenges, we propose a communication-efficient and privacy-enhanced asynchronous FEL framework for edge computing in IIoT. First, we introduce an asynchronous model update scheme to reduce the computation time that edge nodes wait for global model aggregation. Second, we propose an asynchronous local differential privacy mechanism, which improves communication efficiency and mitigates gradient leakage attacks by adding well-designed noise to the gradients of {edge nodes}. Third, we design a cloud-side malicious node detection mechanism to detect malicious nodes by testing the local model quality. Such a mechanism can avoid malicious nodes participating in training to mitigate label-flipping attacks. Extensive experimental studies on two real-world datasets demonstrate that the proposed framework can not only improve communication efficiency but also mitigate malicious attacks while its accuracy is comparable to traditional FEL frameworks.
\end{abstract}

\begin{CCSXML}
	<ccs2012>
	<concept>
	<concept_id>10010147.10010919.10010172.10003824</concept_id>
	<concept_desc>Computing methodologies~Self-organization</concept_desc>
	<concept_significance>500</concept_significance>
	</concept>
	<concept>
	<concept_id>10002978.10003006.10003013</concept_id>
	<concept_desc>Security and privacy~Distributed systems security</concept_desc>
	<concept_significance>300</concept_significance>
	</concept>
	<concept>
	<concept_id>10010147.10010178.10010219.10010220</concept_id>
	<concept_desc>Computing methodologies~Multi-agent systems</concept_desc>
	<concept_significance>500</concept_significance>
	</concept>
	</ccs2012>
\end{CCSXML}

\ccsdesc[500]{Computing methodologies~Self-organization}
\ccsdesc[300]{Security and privacy~Distributed systems security}
\ccsdesc[500]{Computing methodologies~Multi-agent systems}
\keywords{Federated Edge Learning, Edge Intelligence, Local Differential Privacy, Gradient Leakage Attack, Poisoning Attack.}

\maketitle

\section{Introduction}
Advanced artificial intelligence technologies are widely used in various applications for edge computing in the Industrial Internet of Things (IIoT), such as smart transportation, smart home, and smart healthcare, which have greatly promoted the development of Industry 4.0 \cite{ref-1,ref-2,ref-74}. For example, Liu \textit{et al.} in \cite{ref-3} developed a new edge intelligence-based traffic flow prediction application. {Edge nodes use a large amount of user data collected and shared by nodes to train advanced deep learning models \cite{ref-75}. But edge nodes may leak private data during {the} data collection, data transmission, and data sharing {processes}, which {would} lead to serious privacy problems for both data owners and data providers \cite{ref-41,ref-42,ref-43,ref-44,ref-45}.} To solve these problems, previous {works} mainly {applied} encryption algorithms to encrypt the transmitted and shared data without destroying the privacy of the data owners. Yuan \textit{et al.} in \cite{ref-28} used the encryption algorithms to {propose} a privacy-preserving social discovery service architecture for {sharing} images in social media {websites}. However, it is {a challenge} for edge nodes to {efficiently} encrypt large amounts of data in IIoT {due to the high computational cost of  encryption algorithms} \cite{ref-46}. Therefore, we need to seek a communication-efficient and secure way for edge nodes to train advanced deep learning models.

To alleviate the privacy leakage problem of training data in IIoT, a promising edge learning paradigm called {Federated Edge Learning} (FEL) \cite{ref-5,ref-14} was proposed for edge nodes to perform collaborative deep learning in IIoT. {In contrast to centralized distributed learning methods with a central data server, FEL allows edge nodes to keep the training dataset locally, perform local model training, and only send shared gradients to the cloud instead of raw data \cite{ref-3,ref-5,ref-12,ref-46,ref-53}.} Therefore, the current {studies} mainly {focus} on using FEL framework to achieve secure collaborative learning between edge nodes in IIoT. {For example, Lu \textit{et al.} in \cite{ref-29,ref-20,ref-80} proposed a blockchain-based FEL scheme to achieve secure data sharing, thereby achieving secure collaborative learning in IIoT.}

However, there are {still} two challenges in the current {studies}: communication overhead and {data} privacy \cite{ref-76}. The former indicates that communication bandwidth and network delay {become} the bottlenecks of {FEL further development}. {{The reason is that the exchange of a large number of gradients between the cloud and the edge nodes will cause expensive communication overhead, thereby affecting the model training process} \cite{ref-5}.} The latter indicates that the FEL is vulnerable to gradient leakage \cite{ref-15,ref-40} and poisoning attacks \cite{ref-2,ref-10,ref-11,ref-12,ref-16,ref-27}. The {untrusted} cloud {can acquire} the private data of edge nodes by launching gradient leakage attacks. {For example,} Zhu \textit{et al.} in \cite{ref-24} utilized the gradients uploaded by edge nodes to restore local training data. Furthermore, the edge nodes perform local model training by using local data, but this provides opportunities for malicious nodes to {carry out} label-flipping attacks (i.e., a type of poisoning attacks). Fung \textit{et al.} in \cite{ref-10} demonstrated that a malicious node can modify the {labels} of the local training dataset to {execute} a label-flipping attack. {In this attack, the edge node learns the wrong knowledge from the data with the flipped label and generates the wrong model update, which leads to the cloud aggregation of the wrong model update. Therefore, this will degrade the performance of the system.}

Differential Privacy (DP) techniques \cite{ref-33} are often used to mitigate gradient leakage attacks. Specifically, DP noise mechanism (e.g., Laplace noise mechanism \cite{ref-47} and Gaussian noise mechanism \cite{ref-48}) can be {used to prevent gradient leakage attacks by adding noises to the gradients.} {A differentially private FEL framework was proposed in \cite{ref-6,ref-20}} to mitigate gradient leakage attacks for IIoT applications. However, the DP mechanism only provides {the} privacy guarantee in the data analysis phase {rather than} the node-level privacy. {Zhao \textit{et al.} in \cite{ref-2} presents client-side detection schemes to detect malicious client for label-flipping attacks.} Such {a scheme} can prevent malicious nodes from participating in model training. However, {the above-mentioned} schemes {actually} send the sub-model generated by an edge node to other nodes for detection, {which may result in the information leakage} of the sub-models.

To develop a {communication-efficient and secure} {FEL} framework for edge computing in IIoT, {it needs to} improve {the} communication efficiency and mitigate malicious attacks without reducing {the accuracy of FEL}. Specifically, we propose {an} asynchronous model update scheme to improve {the} communication efficiency. Furthermore, we design {the} asynchronous local differential privacy (ALDP) and malicious node detection {mechanisms}. The ALDP mechanism can protect {the} node-level privacy, which is the {main} difference from the {existing} DP {mechanisms}. Unlike the traditional client-side detection scheme, we propose {the} cloud-side detection scheme to avoid collusion and attack between nodes. {The cloud-only needs to {use} a testing dataset to test the quality of each local model to {perform the} malicious node detection. Based on the test results, the cloud aggregates model updates uploaded by the normal nodes to obtain the global model. We evaluate the proposed framework on two real-world datasets MNIST \cite{ref-50} and CIFAR-10 \cite{ref-51}.} The experimental results show that the proposed framework can achieve high-efficiency communication and mitigate inference and label-flipping attacks without loss of accuracy. The {main} contributions of this paper are summarized as follows:
\begin{itemize}
	\item We develop a communication-efficient asynchronous federated edge learning architecture by leveraging an asynchronous model update scheme instead of the synchronous update scheme.
	
	\item We achieve the node-level privacy guarantee of updated models in asynchronous federated edge learning by incorporating local differential privacy mechanism into an asynchronous model update scheme. We also mathematically prove that this mechanism can achieve convergence.
	
	\item We propose a cloud-side malicious node detection mechanism to evaluate the model quality for mitigating label-flipping attacks in the proposed framework.
	
	\item We conduct extensive experiments on two real-world datasets to demonstrate the performance of the proposed framework. Experimental results show that the framework can mitigate malicious attacks and achieve efficient communication without compromising accuracy.
\end{itemize}
The rest of this paper is organized as follows. {Section \ref{sec-2} is the Related Work part, which reviews the literature on communication-efficient and attack-resistant federated edge learning. Section \ref{sec-3} is the Preliminary part which presents the background knowledge of the local differential privacy, federated edge learning, and malicious attacks. Section \ref{sec-4} is the System Model part which presents our proposed model and design goals. In Section \ref{sec-5}, we present the Local Differentially Private Asynchronous Federated Edge Learning Framework. {Section \ref{sec-6} discusses} the experimental results. Concluding remarks are described in Section \ref{sec-7}.}

\section{Related Work}\label{sec-2}
{The convergence of Artificial Intelligence and Edge Computing has spawned many new paradigms of Edge Intelligence (Edge-AI), especially the Federated Edge Learning (FEL) paradigm.} Although the Federated Edge Learning Framework has been applied in many fields thanks to its privacy protection features, there are still two challenges: communication overhead and malicious attacks. The state-of-the-art literature related to the proposed framework presented in this paper can be divided into two main areas. The first area involves how to improve communication efficiently. The second area considers how to resist malicious attacks.

\subsection{Communication-efficient Federated Egde Learning}
{To improve the communication efficiency of the FEL framework, researchers generally use gradient quantization, gradient sparsification and asynchronous update techniques. These methods mainly reduce the number of gradients in the interaction between the cloud and the edge nodes in the framework to achieve efficient communication.}

\textbf{Gradient Quantization:} Gradient quantization generally quantizes 32-bit floating-point gradients to 8 bits, 4 bits, or even 2 bits to reduce communication time \cite{ref-9}. For example, Alistarh \textit{et al.} in \cite{ref-8} proposed a communication-efficient stochastic gradient descent (QSGD) algorithm to greatly improve the communication efficiency of the distributed learning framework. Specifically, the authors used gradient quantization and gradient encoding techniques to losslessly compress the gradients uploaded by edge nodes. Samuel \textit{et al.} in \cite{ref-55} proposed a variance reduction-based gradient quantization algorithm which can realize a distributed learning framework with lossless precision and communication efficiency. Although gradient quantization can greatly reduce the size of the gradient, it will lose a large amount of rich semantic information in the gradient, which makes it difficult to ensure that the performance of the framework is not affected by information loss.

\textbf{Gradient Sparsification:} Edge nodes only upload gradients with larger values to the cloud, and zero gradients with other smaller values to achieve efficient distributed learning in communication \cite{ref-6}. Previous works \cite{ref-56,ref-57,ref-58,ref-59,ref-60,ref-61} have proposed many sparsification methods and also provided a strict mathematical proof of convergence. Lin \textit{et al.} in \cite{ref-6} proposed a high-precision distributed learning framework that improves communication efficiency by 300 $ \times $. Specifically, the authors first used the gradient sparsification technique to sparse the gradient, then upload the compressed and sparse gradient to the cloud, and the cloud decompresses and aggregates them.

{\textbf{Asynchronous Update:} In a federated learning system, edge nodes can interact with the server regularly for model updates through synchronous or asynchronous updates \cite{ref-19}. In particular, the previous works \cite{ref-19,ref-20,ref-77,ref-78,ref-79,ref-80} used asynchronous updates to improve the communication efficiency of the system. Specifically, the asynchronous update method can reduce the calculation time of the node by reducing the calculation waiting time of the edge node, thereby improving the communication efficiency of the system. The reason is that in the synchronous federated learning system, each node needs to wait for the calculation of other nodes to perform update aggregation, which greatly reduces the efficiency of communication.}

Although the above the-state-of-art communication efficient methods have achieved great success, none of these methods can resist malicious attacks initiated by nodes or clouds. The reason is that these methods assume that the cloud or node is honest. Furthermore, these methods are difficult to converge with attack tolerance techniques. Therefore, we need to design a solution that can converge well with attack tolerance techniques.

\subsection{Attack-Resistant Federated Edge Learning}
A common setting of the federated edge learning framework includes cloud and edge nodes. In FEL framework, the edge nodes and the cloud have the same power and have the same influence on the framework. So both malicious nodes and malicious clouds can launch malicious attacks on the framework and hurt the performance of the framework. Many researchers have proposed many attack-resistant methods to defend against malicious attacks, but these methods can be divided into three categories: Byzantine robust aggregation, anomaly detection, and differential privacy.

\textbf{Byzantine Robust Aggregation:} We assume that there are some colluding Byzantine edge nodes in FEL framework. These Byzantine nodes will send some wrong messages to the cloud, such as Gaussian noise, zero gradient attack, arbitrary gradient attack, and other messages harmful to the performance of the framework. Previous works \cite{ref-62,ref-63,ref-64,ref-65,ref-66,ref-67} designed many Byzantine aggregation algorithms based on different aggregation rules, e.g., median, geometric mean, arithmetic mean, etc. However, the sensitivity of these Byzantine robust aggregation algorithms will be affected by the gradient variance in stochastic gradient descent \cite{ref-65}. 

\textbf{Anomaly Detection:} In malicious attacks, we can treat model updates uploaded by malicious nodes as anomalies, so that anomaly detection techniques can be used to defend against malicious attacks \cite{ref-70}. For example, Li \textit{et al.} in \cite{ref-71} used dimension reduction techniques to detect abnormal model updates. Specifically, the authors used a binary classification technique to distinguish between benign updates and malicious updates after dimension reduction on model updates. Yang \textit{et al.} in \cite{ref-72} proposed a neural network-based method, i.e., FedXGBoost model to detect the malicious model updates. However, these methods are impractical in the actual operation of the FEL framework. The reason is that these models themselves are more complex, which will increase the complexity and operating costs of the FEL framework.

\textbf{Differential Privacy:} Differential Privacy (DP) is a privacy protection technology that disturbs the parameters by adding noise (such as Gaussian noise mechanism, Laplacian noise mechanism) to the parameters of the deep learning model so that the attacker cannot know the real parameters \cite{ref-33,ref-40}. Therefore, Geyer \textit{et al.} in \cite{ref-73} added DP noise to the weights so that protecting the privacy of the federated deep learning model by using the Gaussian noise mechanism. However, DP noise can only protect the privacy of the aggregated parameters (i.e., the privacy of the cloud) but cannot protect the privacy of nodes. 

\section{Preliminary}\label{sec-3}
{In this section, we {briefly introduce} {the} background knowledge {related to} local differential privacy, federated edge learning, gradient leakage attacks, and poisoning attacks.}

\subsection{Local Differential Privacy}
Local Differential Privacy (LDP) \cite{ref-48} technique was proposed to collect sensitive data without making any assumptions on a trusted cloud {environment}. {Unlike DP techniques, LDP protects the privacy of user data by using local disturbances \cite{ref-33}. Specifically, LDP mainly focuses on protecting the privacy of a single node or a single user rather than the privacy of all nodes or all users.} In this context, the cloud can collect user data without compromising privacy \cite{ref-36}. Let  $x$ be the input {vector}, and a perturbed vector $v^*$ for $x$ is output through a {randomized} algorithm $\mathcal{M}$. Therefore, $(\varepsilon ,\delta )$-LDP can be defined as follows:
\begin{definition}\label{de-1}
{\textbf{(\textit{$\bm{(\varepsilon ,\delta )}$-LDP})}\cite{ref-new-peng-1}. \textit{A randomized algorithm $\mathcal{M}:{\mathbb{X}} \to {\mathbb{V}}$ with domain $\mathbb{X}$ and range $\mathbb{V} \subseteq \mathbb{X}$ satisfies $(\varepsilon ,\delta )$-LDP if and only if for any two inputs $x,{x'} \in \mathbb{X}$ and output ${v^*} \in \mathbb{V}$:
\begin{equation}\label{eq-1}
\Pr [\mathcal{M}(x) = v^*] \le {e^\varepsilon } \cdot \Pr [\mathcal{M}(x') = {v^*}] + \delta,
\end{equation}
where $\Pr [ \cdot ]$ is the conditional probability  {density} function that depends on $\mathcal{M}$, $\varepsilon$ represents privacy budget that controls the trade-off between privacy and utility, and $\delta$ is a sufficiently small positive real number.
It also implies a higher privacy budget $\varepsilon$ means a lower privacy protection. The above definition can be changed to $\varepsilon$-LDP when $\delta = 0$.}}
\end{definition}

According to Definition {\ref{de-1}}, {randomized} perturbations {should be} performed by users rather than the cloud. In the context of {generic} {settings} for {FEL}, the cloud only aggregates and analyzes the results after local disturbances {that controlled by the privacy budget $\varepsilon$}. {It} ensures that the cloud cannot distinguish $x$ or $x'$ from  $v^*$ with high-confidence.

\subsection{Federated Edge Learning}
The centralized machine learning model uses distributed learning algorithms such as distributed stochastic gradient descent (D-SGD) \cite{ref-52} to analyze and model aggregated data in the cloud. {However, the centralized machine learning model requires data sharing or data aggregation to update the cloud model, which will leak the privacy of user data. To address these challenges, Google proposed the federated edge learning (FEL) framework \cite{ref-5,ref-14} to train a shared global model by using federated SGD (FedSGD) or federated averaging (FedAVG) \cite{ref-5} algorithm while using the local dataset without sharing raw data \cite{ref-3}.} In each iteration $t$ ($t \in \{ 1,2, \ldots ,T\}$), the procedure of FEL {can be described} as follows:
\begin{enumerate}[label=\roman*)]
\item \textbf{\textit{{Phase 1, Initialization:}}} All nodes that want to participate in FEL training check in to the cloud. Then the cloud selects nodes with good network status or strong computing capabilities to participate in the training task. After that, the cloud sends a pre-trained global model ${\omega _{t}}$ to each selected node.
\item \textbf{\textit{{Phase 2, Training:}}} Each node trains global model $\omega _t^k \leftarrow {\omega _t}$ by using local dataset to obtain the updated global model $\omega _t^{k + 1}$ in {each iteration}. In particular, for the $k$-th edge node {($k \in \{ 1,2, \ldots ,K\}$)}, the loss function needs to be optimized as follows:
 {\begin{equation}
\arg \mathop {\min }\limits_{\omega  \in \mathbb{R}}{F_k}(\omega ),{F_k}(\omega ) = \frac{1}{{{D_k}}}\sum\nolimits_{i \in {D_k}} {{f_i}} (\omega ),
\end{equation}}

{where $D_k$ denotes the size of local dataset that contains input-output {vector} pairs $(x_i, y_i)$, ${x_i},{y_i} \in \mathbb{R}$, $\omega$ is local model parameter, and ${f_i}(\omega )$ is a local loss function (e.g., ${f_i}(\omega ) = \frac{1}{2}({x_i}^T\omega  - {y_i})$).}
\item \textbf{\textit{Phase 3, Aggregation:}} {The cloud uses} the FedSGD or FedAVG algorithm to obtain a new global model $\omega_{t + 1}$ for the next iteration, i.e.,
\begin{equation}
{\omega _{t + 1}} \leftarrow {\omega _t} - \frac{1}{K}\sum\limits_{k = 1}^K {{F_k}(\omega )},
\end{equation}
{where $K$ denotes the number of edge nodes.}
Then {again,} the cloud sends the new global model $\omega_{t + 1}$ to each edge node.
\end{enumerate}

The above steps are repeated until the global model of the cloud converges. This learning method avoids data sharing and collaboratively trains a shared global model, thereby realizing privacy protection \cite{ref-4}.

Generally, we can divide the model update {methods} in FEL into two categories: {synchronous model update \cite{ref-5} and asynchronous model update \cite{ref-37} methods}, as illustrated in Fig. \ref{fig-10}. {For} the synchronous model update, all edge nodes simultaneously upload the model {updates} to the cloud after performing a round of local model training. For the asynchronous model update, in contrast, {each edge node performs local model training and then uploads model updates asynchronously, i.e., without waiting for other nodes} to the cloud. 

\begin{figure}[!t]
	\centering
	\includegraphics[width=1\linewidth]{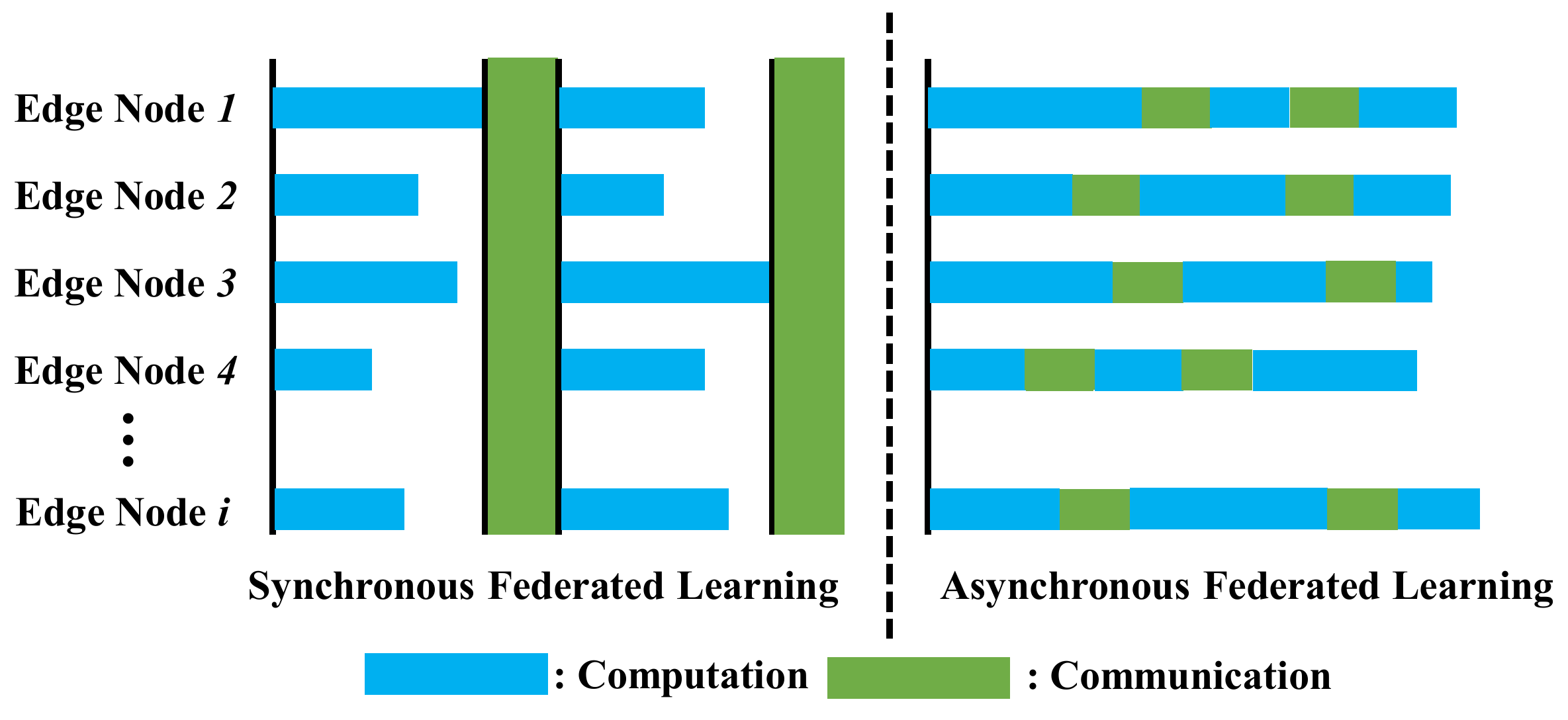}
	\caption{The overview of the synchronous model update and asynchronous model update schemes.}
	\label{fig-10}
\end{figure}
\begin{figure}[t]
	\centering
	\large
	\subfigure []{\includegraphics[width=0.45\linewidth]{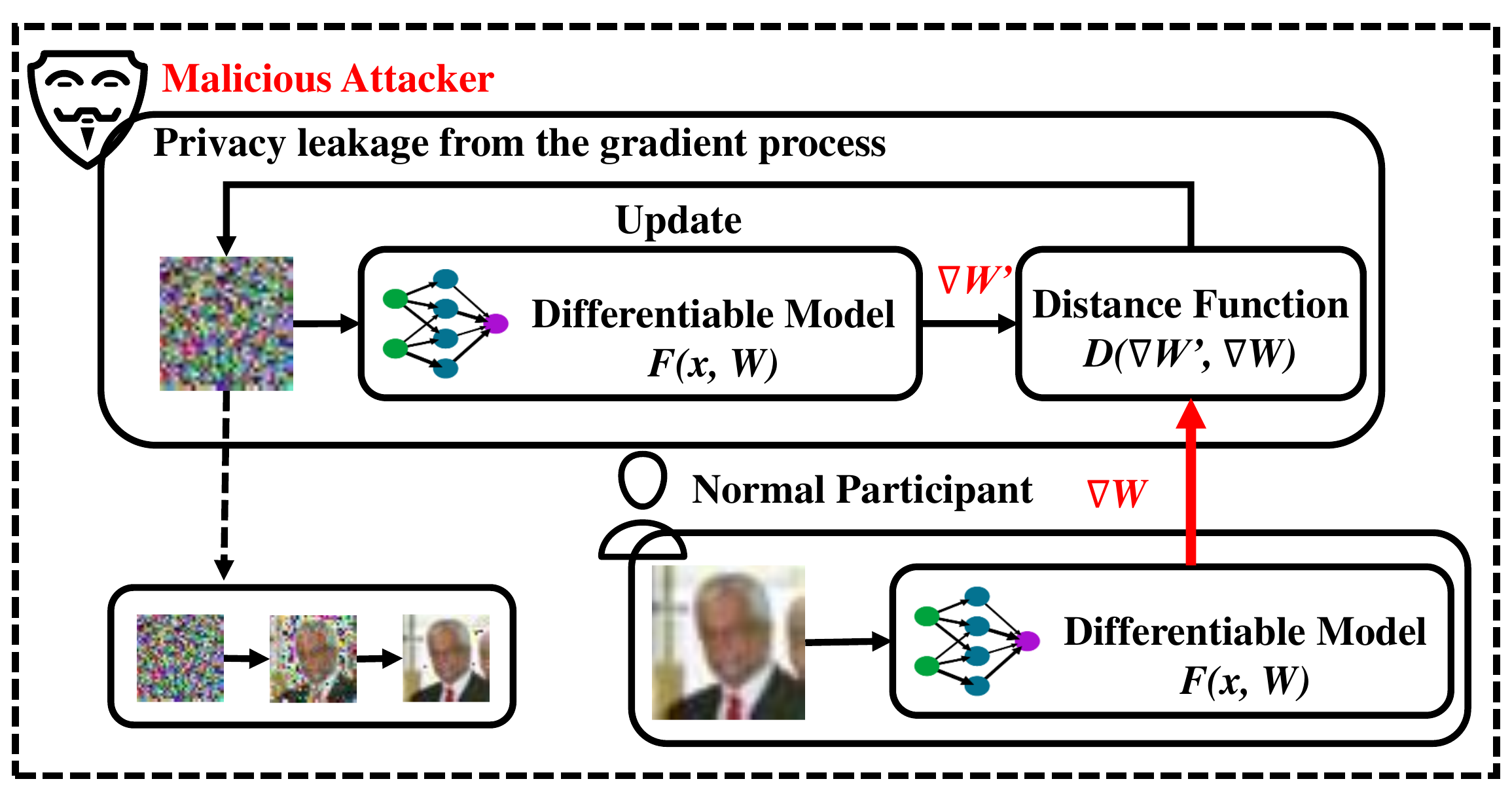}
		\label{11-a-1}}
	\hfill
	\subfigure[]{	\includegraphics[width=0.45\linewidth]{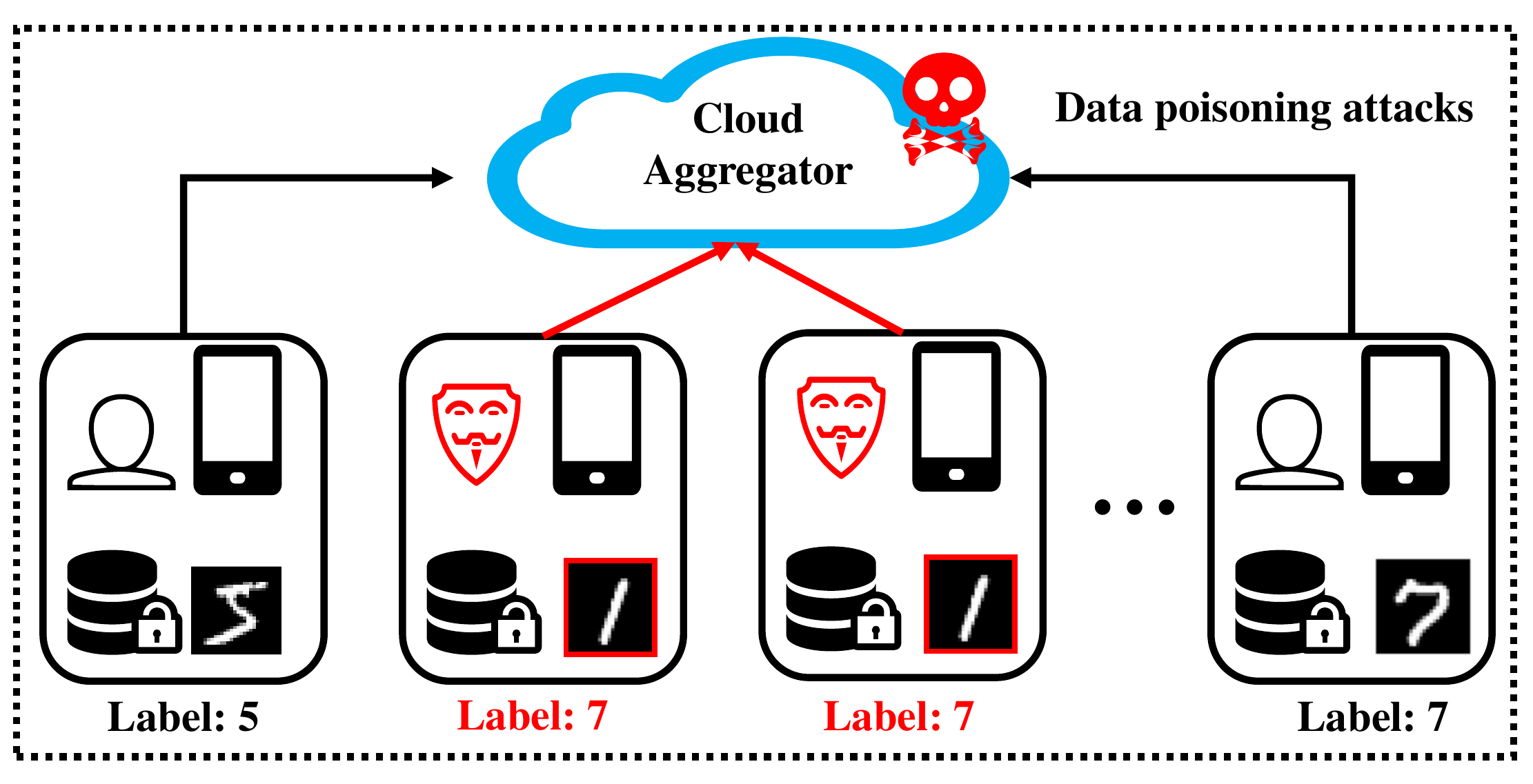}
		\label{11-b-1}}
	\caption{(a) Privacy leakage from the gradient process \cite{ref-24}. (b) Data poisoning attacks on the federated edge learning \cite{ref-10}.}
	\label{fig-11}
\end{figure}

\subsection{{Gradient Leakage Attacks and Data Poisoning Attacks}}
Assumed that there are several edge nodes and a cloud may be malicious in the edge computing environment, we further present possible attacks to FEL. 

\textbf{Gradient Leakage Attack:} The malicious cloud can launch gradient leakage attacks to use the gradient information uploaded by edge nodes to recover the sensitive private training data \cite{ref-24}, as illustrated in Fig. \ref{11-a-1}. Zhu \textit{et al.} in \cite{ref-24} utilized gradient matching algorithm to recover nodes' private training data. Specifically, the authors randomly constructed a pair of noise data pairs $(X', Y')$ to reconstruct the node's private data pair $(X, Y)$. We assume that the attacker knows the model structure and model parameters of the node, which is easy to achieve in FEL setting. The attacker's goal is to make the gradient generated by the noise data pair infinitely close to the gradient generated by the real private data pair. The formal definition of this goal is as follows:
\begin{equation}
\begin{array}{l}
\mathrm{Constructing}\;(X',Y'),\\
s.t.\frac{{\partial \mathcal{L}(\mathcal{F}(W,X');Y')}}{{\partial W}} = \frac{{\partial \mathcal{L}(\mathcal{F}(W,X);Y)}}{{\partial W}}.
\end{array}
\end{equation}
Since the gradients always contain the original data distribution information of the training samples, the sensitive training samples would be stolen by the malicious cloud or malicious node once the gradients are leaked.

\textbf{Data Poisoning Attack:} As shown in Fig. \ref{11-b-1}, the malicious edge nodes could launch data poisoning attacks to poison the global model in FEL by changing the behavior of trained models on specific inputs such as changing the label of training data samples and injecting toxic training data samples. Specifically, a typical type of data poisoning attacks is label-flipping attack \cite{ref-10}. Label-flipping attacks could be executed to poison the global model in FEL by changing the behaviors of trained models on specific inputs such as changing the data labels of training samples and injecting toxic training data samples. 

{Previous studies have} shown that FEL frameworks in edge computing are also vulnerable to gradient leakage attacks and label-flipping attacks \cite{ref-11,ref-12}. {Adversaries} can launch these attacks to acquire sensitive data from nodes and reduce the accuracy of FEL model and even cause FEL model to fail to converge. 

\section{System Model}\label{sec-4}
We consider {a general} setting for {FEL} in edge computing, where a cloud and $K$ edge nodes work collaboratively to train a global model with {a given} training algorithm (e.g., convolutional neural network (CNN)) for a specific task (e.g., classification task). The edge nodes train the model locally by using the local dataset and upload their updated gradients to the cloud. The cloud uses the FedSGD algorithm or other aggregation algorithms to aggregate these gradients to obtain the global model. In the end, the edge node will receive the {updated} global model sent by the cloud for its own use.

\subsection{Treat Model}
We focus on a collaborative model learning in edge computing to accomplish a classification or prediction task. In this context, FEL is vulnerable to two types of threats: {gradient leakage attacks} and {label-flipping attacks}. {For gradient leakage attacks, we assume that the cloud is untrusted \cite{ref-14}. The malicious cloud may reconstruct the private data of nodes by launching gradient leakage attacks \cite{ref-15}. {The reason is that the cloud can use the rich semantic information of the gradients uploaded by the edge nodes to infer the privacy information of the edge nodes.} For label-flipping attacks, it is assumed that there is a proportion $p$ ($0 < p < 1$) of malicious nodes among all the edge nodes. The malicious nodes damage the training process of the global model by flipping the labels of the local datasets and uploading error or low-quality model updates.} 

{\textbf{Remark:} Since the above-mentioned two attacks do not need to have prior knowledge about the model training process, it is easy for an adversary to carry out \cite{ref-2,ref-10,ref-11,ref-12,ref-16,ref-24,ref-53}. On the other hand, in a federated learning system, edge nodes have complete autonomous control over their own data sets and model updates, which provides malicious nodes with opportunities to maliciously tamper with data labels and model updates.}

\subsection{Our Proposed Architecture}
This work considers an asynchronous FEL that involves multiple edge nodes for collaborative model learning in IIoT, as illustrated in Fig. \ref{fig-1}. This framework includes the cloud and edge nodes. Furthermore, the proposed framework includes two secure mechanisms: a local differential privacy and a malicious node detection mechanism. The functions of the participants are described as follows:
\begin{itemize}
	\item \textbf{\textit{Cloud:}} {The cloud} is generally a cloud server with rich computing resources. The cloud {contains} three functions: (1) initializes the global model and sends the global model to the all edge nodes; (2) aggregates the gradients uploaded by the edge nodes until the model converges; (3) performs the malicious node detection mechanism.
	\item \textbf{\textit{Egde Node:}} {Edge nodes are generally agents and clients, such as mobile phones, personal computers, and vehicles, which contain buffers and functional mechanisms. The edge node uses the local dataset to train local models and uploads the gradients to the cloud until the global model converges. {The buffer is deployed in the edge node that contains the scheduler and coordinator. In this paper, we use the buffers to store the local accumulated gradients of the edge nodes and the scheduler in the cloud to achieve the asynchronous model update. Note that buffer is the storage area responsible for storing gradients in the FL system.} In particular, the coordinator in the edge node is responsible for triggering nodes to perform local model training.}
\end{itemize}	
The functions of the proposed mechanisms are {further} described as follows:
\begin{itemize}
	\item \textbf{\textit{Local Differential Privacy Mechanism:}} The local differebtial privacy mechanism is deployed in the edge nodes, which can perturb the {raw} local gradients of the nodes to ensure that the cloud cannot distinguish input gradients from edge nodes with high-confidence, thereby protecting the privacy of the node.
	\item \textbf{\textit{Malicious Node Detection Mechanism:}} The malicious detection mechanism is deployed in the cloud as illustrated in Fig. \ref{fig-1}, which can detect malicious edge nodes to improve the security of the framework.
\end{itemize}

\subsection{Design Goals}
In this paper, our goal is to develop a communication-efficient and attack resistance asynchronous {FEL} framework for edge intelligence in IIoT. Similar to most FEL frameworks, the proposed framework {achieves} secure model training for edge computing in IIoT. First, the proposed framework can significantly improve communication efficiency by using an asynchronous model update method. Second, the proposed framework {can} mitigate gradient leakage and label-flipping attacks. This framework {applies} a local differential privacy mechanism to mitigate gradient leakage attacks. Moreover, the proposed framework {introduces} a malicious node detection mechanism to prevent label-flipping attacks. Third, the performance of the proposed framework is comparable to {that of} traditional FEL frameworks.

\section{Local Differentially Private Asynchronous Federated Edge Learning Framework}\label{sec-5}

\begin{figure}[!t]
	\centering
	\includegraphics[width=1\linewidth]{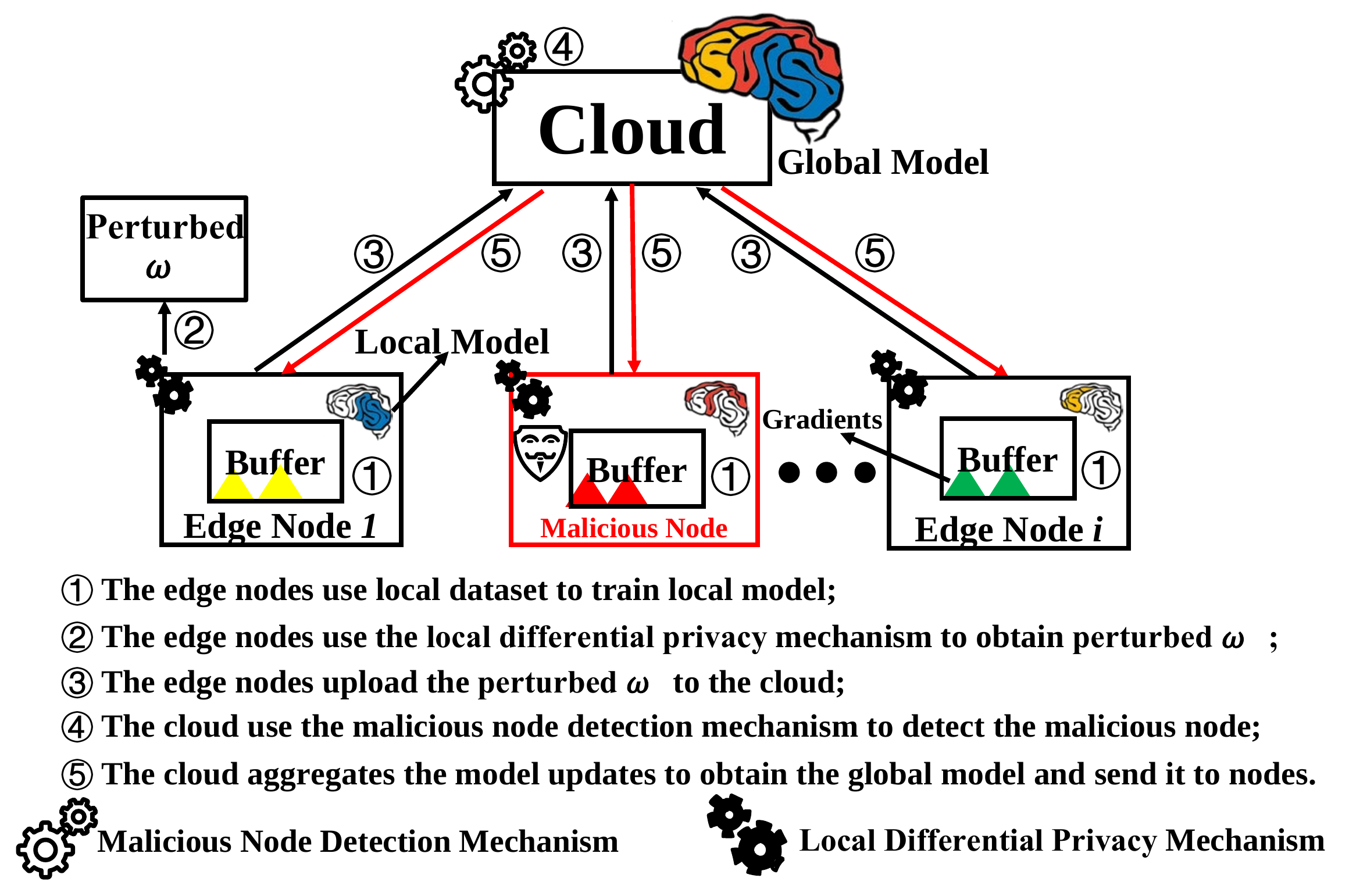}
	\caption{The overview of the {communication-efficient and attack-resistant} asynchronous federated edge learning framework.}
	\label{fig-1}
\end{figure}

In this section, considering {the above system design goal}, we propose a communication-efficient privacy-preserving asynchronous FEL framework, as illustrated in Fig. \ref{fig-1}. 

\subsection{Asynchronous Model Update Scheme}
It is challenging for edge nodes to obtain a better global model due to the huge communication overhead and network latency in IIoT. The reason is that the communication overhead and network delay will affect the gradient exchange between the {nodes} and the cloud, thus affecting the model aggregation at the cloud. To address this challenge, we propose the asynchronous model update scheme to update the global model. As shown in Fig. \ref{fig-10}, the scheme can reduce the {computation time} of the nodes by asynchronously aggregating the model updates, thereby improving the communication efficiency of {FEL}. The formal definition of the communication efficiency of the FEL framework is as follows:
\begin{equation}
	\kappa  = \frac{{Comm}}{{Comp + Comm}},
\end{equation}
where $\kappa$ represents communication efficiency, $Comp$ is the communication time, and $Comp$ is the computation time. It can be observed from Fig. \ref{fig-10} that the $Comp$ of asynchronous model update scheme is shorter than that of synchronous one, and hence the communication efficiency $\kappa$ of the asynchronous model update scheme is higher than that of synchronous one. In particular, in such a scheme, we prefer to upload gradients with large values. The reason is that a gradient with a large value contributes greatly to the update of the global model \cite{ref-6}. 

{If edge nodes upload gradients with large values first, the cloud will encounter a challenge: many gradients with small values are not aggregated, resulting in information loss in the global model.} To address this challenge, we propose a local gradient accumulation method in the asynchronous model update scheme, which can accumulate the small gradient {updates} in the gradient accumulation container. Furthermore, we also use local differential privacy mechanisms to perturb these gradients (see more details in Section \ref{ldp}).
\begin{figure}[!t]
	\centering
	\includegraphics[width=0.68\linewidth]{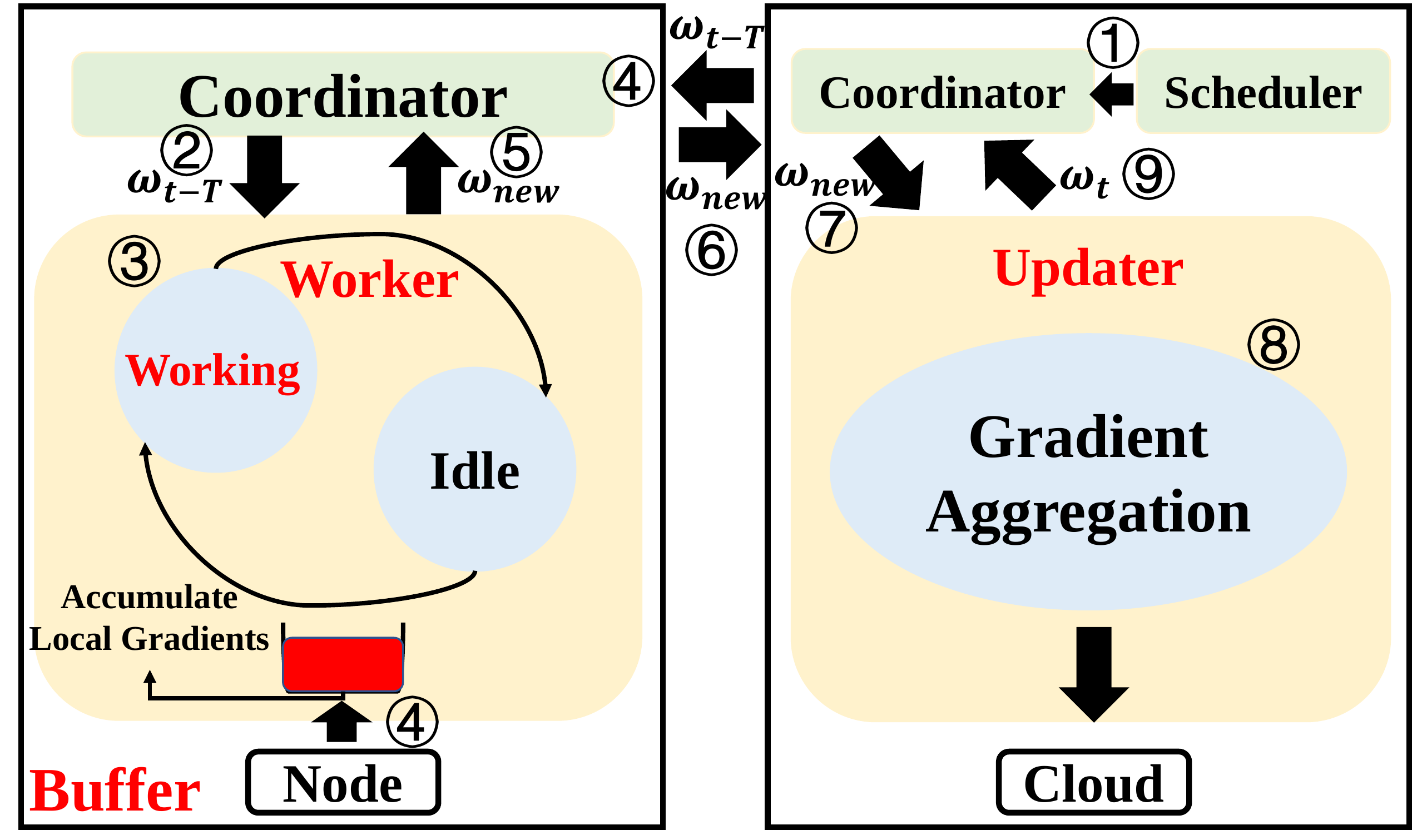}
	\caption{The overview of the asynchronous model update scheme.}
	\label{fig-3}
\end{figure}

As shown in Fig. \ref{fig-3}, the proposed scheme includes a scheduler, coordinator, worker, and updater. We assume that the proposed framework needs $\tau $ training epochs to achieve a federated global model optimization. In the $t$-th epoch, the cloud receives the locally trained model ${\omega _{new}}$ from all nodes and updates the global model through average aggregation. This scheme's workflow consists of 9 steps {in the following 3 phases:}
\begin{enumerate}[label=\roman*)]
	\item \textbf{\textit{Phase 1, Initialization:}} The scheduler triggers federated edge global model training tasks via the coordinator (step \textcircled{\scriptsize 1}). {The cloud} lets the coordinator send a delayed global model ${\omega _{t - T}}$ to each edge node (step \textcircled{\scriptsize 2}).
	\item \textbf{\textit{Phase 2, Training:}} The worker {in a node} receives the coordinator's command to {start the node-local training} (step \textcircled{\scriptsize 3}). Depending on the availability of the node, the worker process can switch between two states: working and idle. {The node performs local model training and continuously uploads gradients to the gradient accumulation container in the {local} buffer until the worker instructs to stop the training command (step \textcircled{\scriptsize 4}).} \textit{\textbf{We emphasize that the nodes do not need to wait for the cloud to finish model aggregation before starting the next round of training, but continue to perform local training until the worker instructs to stop the training instruction.}} Worker submits the local updated private gradient by using local differential privacy mechanism to the cloud via the coordinator (step \textcircled{\scriptsize 5}).
	\item \textbf{\textit{Phase 3, Aggregation:}} The scheduler queues the received models and feeds them back to the updater sequentially (steps \textcircled{\scriptsize 6}, \textcircled{\scriptsize 7}). The cloud uses 
	\begin{equation}
		{\omega _t} = \alpha {\omega _{t - 1}} + (1 - \alpha ){\omega _{new}},
	\end{equation}
	to perform gradient aggregation (step \textcircled{\scriptsize 8}), {where $\alpha  \in (0,1)$ and $\alpha$ is the mixing hyperparameter}. $\alpha$ can be used as an adjustment factor to control the impact of delayed updates on the global model, so we can optimize async global model optimization by adjusting $\alpha$. We find that the global model can be optimal when $\alpha = 0.5$ \cite{ref-19}. Finally, the cloud sends the optimal global model to all edge nodes via scheduler {for the next training iteration} (step \textcircled{\scriptsize 9}).
\end{enumerate}
In this scheme, steps \textcircled{\scriptsize 2} and \textcircled{\scriptsize 6} perform asynchronously in parallel so that the cloud can trigger {FEL} tasks on {the} edge nodes at any time. Likewise, the edge nodes can submit the local model updates to the cloud at any time. Such a scheme greatly reduces the {computation time} of the edge nodes, thereby improving the training efficiency.
\subsection{{Asynchronous} Local Differential Privacy Mechanism}\label{ldp}
{Unlike the {DP} mechanism, the {LDP} mechanism focuses on protecting the privacy of data owners during the data collection process. LDP is a {countermeasure} to provide strong privacy {guarantee}, which is commonly used in distributed machine learning. To protect the privacy of the updated gradients in asynchronous {FEL}, we add well-designed {noises} to {disturb the updated gradients by applying a Gaussian mechanism.}} We first define the Gaussian mechanism as follows:
\begin{definition}\label{defi-2}
\textbf{\textit{(Gaussian Mechanism \cite{ref-40}).}} \textit{Suppose an edge node wants to {generate} a function $f(x)$ of an input $x$ subject to $(\varepsilon ,\delta )$-LDP. Thus, we have:}
\begin{equation}
M(x) \buildrel \Delta \over = f(x) + \mathcal{N}(0,{\sigma ^2}{S^2}).
\end{equation}
\textit{{It is assumed} that $S=||f(x) - f(x')|{|_2} \le {\Delta _f}$, {(i.e., the sensitivity $S$ of the function $f(x)$ is bounded by ${\Delta _f}$)}, $\forall x,x'$, and then for any $\delta  \in (0,1)$, if and only if $\varepsilon  = \frac{{{\Delta _f}}}{\sigma }\sqrt {2\log \frac{{1.25}}{\delta }} $, Gaussian mechanism satisfies $(\varepsilon ,\delta )$-LDP.}
\end{definition}

\begin{definition}
\textbf{\textit{(Asynchronous Local Differential Privacy Mechanism (ALDP)).}} \textit{To achieve the local differential privacy guarantee for updated gradients of the edge nodes, edge nodes apply Gaussian mechanism {(see Definition (\ref{defi-2}))} on the updated gradients of each edge node by adding well-designed noise to perturb the gradients:
\begin{equation}\label{eq-8}
\resizebox{.9\hsize}{!}{$
\begin{aligned}
	{\omega _{t + 1}} = \alpha {\omega _t} + \underbrace {(1 - \alpha )}_{{\rm{ Factor}}}\frac{1}{K}(\underbrace {\overbrace {\sum\limits_{k = 1}^K {\Delta {\omega ^k}} /\max (1,\frac{{||\Delta {\omega ^k}|{|_2}}}{S})}^{{\rm{Sum\; of\; updates\; clipped\; at\; S }}} + \underbrace {\sum\limits_{k = 1}^K {\mathcal{N}(0,{\sigma ^2}{S^2})} }_{{\rm{Sum\; of\; noise\; at\; S}}}}_{{\rm{Gaussian\; mechanism\; approximating\; sum\; of\; updates}}}),
\end{aligned}$}
\end{equation}
where $K$ is the total number of edge nodes, each node's local model ${{\omega ^k}}$, $\forall k,||\Delta {\omega ^k}|{|_2} < S$.}
\end{definition}

{To} prevent {the} gradient explosion and information loss, we use gradient clipping \cite{ref-33,ref-54} to control gradient range when $\delta$ reaches a certain threshold and obtain the scaled versions $\Delta {{\bar \omega }^k}$, i.e., {$\Delta {{\bar \omega }^k}=\Delta {\omega ^k}/\max (1,\frac{{||\Delta {\omega ^k}|{|_2}}}{S})$}. We notice that {$\frac{{{\sigma ^{\rm{2}}}{S^{\rm{2}}}}}{K}$} controls the distortion of the average gradient update and $\sigma$, $D_k$ (i.e., the size of local dataset) defines the privacy loss caused when the random mechanism provides an average approximation. To track the privacy loss, we use the moments accountant {method} proposed in \cite{ref-40} to {evaluates $\delta $ given $\varepsilon$, $\sigma$, and $K$ where $\delta  \ll \frac{1}{K}$}. Compared with the standard composition theorem \cite{ref-33}, {the moments accountant} provides stricter restrictions on the privacy losses that occur. 

The {ALDP mechanism} is presented in Algorithm 1. The operating phases of ALDP mechanism {in the proposed FEL framework} are given as follows:
\begin{enumerate}[label=\roman*)]
	\item \textbf{\textit{Phase 1, Local Training:}} {Edge node receives the current global model $\omega_{t}$ and uses the local dataset to train the local model, i.e., $\arg \mathop {\min }\limits_{\omega  \in \mathbb{R}}{F_k}(\omega ),{F_k}(\omega ) = \frac{1}{{{D_k}}}\sum\nolimits_{i \in {D_k}} {{f_i}} (\omega )$. Each node determines the level of privacy protection (i.e., $\delta$) according to the size of its own dataset $D$ and privacy budget $\sigma$.}
	\item \textbf{\textit{Phase 2, Adding Noise:}} {Before adding noise to the updated gradients uploaded by edge nodes, the coordinator needs to use gradient clipping techniques to prevent gradient explosions. Then the coordinator uses Algorithm \ref{al-1} to add well-designed noise (refer to Eq. \eqref{eq-8}) to the gradients and upload {the} perturbed gradients to the cloud.}
	\item \textbf{\textit{Phase 3, Gradient Aggregation:}} {The cloud obtains the global model by aggregating perturbed gradients, i.e., ${\omega _t} = \alpha {\omega _t} + (1 - \alpha )\frac{1}{m}\bigg(\sum\limits_{k = 1}^m {((\Delta \omega _t^k} /\zeta ) + N(0,{\sigma ^2}{S^2}))\bigg)$ and sends new global model to each edge node.}
\end{enumerate}

\begin{algorithm}[!t]
\caption{Asynchronous local differential privacy mechanism.} \label{al-1}
\begin{algorithmic}[1] 
\REQUIRE ~~\\ 
Edge nodes $\mathcal{K} = \{ {k_1},{k_2},\ldots, k_i\} $, the local mini-batch size $B$,  the local dataset $D_k$,  the number of local epochs $E$, the learning rate $\eta$, the gradient optimization function $\nabla \mathcal{L}(\cdot ;\cdot)$;\\
\ENSURE ~~\\ 
The disturbed gradient $\omega$;\\
\STATE Initialize $ \omega_{t}$, $\mathrm{Accountant}$($\varepsilon$,$K$);
\FOR{round $t=1,2,\ldots,n$}
{
\STATE $ {\mathcal{K}_m} \leftarrow $ random select $m$ nodes from $\mathcal{K}$ {participates} in this round of training;
\STATE The cloud broadcasts the current global model $\omega_t$ to nodes in $ {\mathcal{K}_m}$;
\STATE $\delta  \leftarrow \mathrm{Accoutant}(D_m,{\sigma _m})$;
\WHILE{$\omega_{t}$ has not converged}
{
\FOR{each edge node $k \in {\mathcal{K}_m}$ \textbf{in parallel}}
{
\STATE Initialize $\omega _t^k = {\omega_t}$;
\STATE $\omega _{t + 1}^k \leftarrow$ $\mathrm{LocalUpdate}$ $(\omega _{t}^k,k,\varepsilon)$;
}
\ENDFOR
\IF{Gradient Clipping}
{
\STATE $\nabla \omega_t^k \leftarrow \mathrm{Local\_Gradient\_Clipping}\,(\nabla \omega_t^k)$;
}
\ENDIF
}
\STATE ${\omega _t} = \alpha {\omega _t} + (1 - \alpha )\frac{1}{m}\bigg(\sum\limits_{k = 1}^m {((\Delta \omega _t^k} /\zeta ) + N(0,{\sigma ^2}{S^2}))\bigg)$;
\STATE where $\zeta  = \max (1,\frac{{{\rm{||}}\Delta \omega _{t }^k|{|_2}}}{S})$;
\ENDWHILE
}
\ENDFOR
\STATE $\mathrm{LocalUpdate}$ $(\omega _{t}^k,k,\varepsilon)$:\hfill  $//$ Run on the local dataset;
\STATE $\mathcal{B} \leftarrow$ (split $\mathcal{S}_o$ into batches of size $B$);
\IF{each local epoch $i$ from 1 to $E$}
{\IF{batch $b\in \mathcal{B}$}
	{
	\STATE	$\omega  \leftarrow \omega  - \alpha  \cdot \nabla \mathcal{L}(\omega ;b)+ {\mathcal{N}(0,{\sigma ^2}{S^2})}$;
	}
\ENDIF
}
\ENDIF
\RETURN $\omega $.
\end{algorithmic}
\end{algorithm}

\subsection{Convergence Analysis}
\subsubsection{Assumptions} In this section, we will briefly introduce some necessary definitions and assumptions as follows:
\begin{definition}
\textit{\textbf{($\mathcal{L}$-smooth Function):} For $\forall x,y,$, if a differentiable function $f$ satisfies:
	\begin{equation}
		f(y) - f(x) \leqslant \langle \nabla f(x),y - x\rangle  + \frac{\mathcal{L}}{2}D,
	\end{equation}
where $\mathcal{L}>0$ and $D = ||y - x|{|^2}$, thus, we call the function $f$ is $\mathcal{L}$-function.}
\end{definition}

\begin{definition}
	\textit{\textbf{($\mu $-strongly Convex Function):} For $\forall x,y,$, if a differentiable function $f$ satisfies:
		\begin{equation}
			\langle \nabla f(x),y - x\rangle  + \frac{\mu }{2}D \leqslant f(y) - f(x),
		\end{equation}
		where $\mu \geqslant 0$, thus, we call the function $f$ is $\mu $-strongly convex function.}
\end{definition}

\begin{assumption}
	\textit{\textbf{(Existence of Global Optimum):} We assume that $\exists \chi  \in {\mathbb{R}^d}$ satisfies $\forall {\xi _*} \in \chi $ is a global optimum of $F(\xi)$, where $\nabla F(\xi ) = 0$ and $\xi _*  = {\inf _\xi }F(\xi )$.}
\end{assumption}
\subsubsection{Convergence Guarantees} Based on the above definitions and assumptions,, we use  
\begin{equation}
{\omega _t} = \alpha {\omega _t} + (1 - \alpha )\frac{1}{m}\bigg(\sum\limits_{k = 1}^m {((\Delta \omega _t^k} /\zeta ) + N(0,{\sigma ^2}{S^2}))\bigg),
\end{equation}
where $\alpha {\omega _{t}}$ represents the delayed model update, $(1 - \alpha )\frac{1}{m}\bigg(\sum\limits_{k = 1}^m {((\Delta \omega _t^k} /\zeta ) + N(0,{\sigma ^2}{S^2}))\bigg)$ represents the model update at the moment, and $\frac{1}{m}\sum\limits_{k = 1}^m {N(0,{\sigma ^2}{S^2})} $ denotes {the} sum of noise at $S$) to update the global model. In this {ALDP} mechanism, we use the hyperparameter $\alpha$ to control the weight of the delayed model update.
\begin{theorem}
Let $F$ be the global loss function where $F$ is {$\mathcal{L}$-smooth and $\mu $-strongly convex}. We assume that each node {performs} local model training at least $\nu \min $ epochs before submitting their perturbed gradients to the cloud. {For} $\forall x \in {{\mathbb {R}}^d},\forall z \sim {D_i}$,  {$i \in \mathbb{N}$}, we have {$\mathbb{E}\|\nabla f(x ; z)-\nabla F(x)\|^{2} \leq V_{1}$, and $\mathbb{E}\left[\|\nabla f(x ; z)\|^{2}\right] \leq V_{2}$ where $V_1$ and $V_2$ are constants}. Let $\lambda  < \frac{1}{\mathcal{L}}$, and ${\omega _{new}}$ and ${\xi _*} \in \chi $ denote {the} delayed model update and a global optimum at the moment after $T$ training epochs, respectively. Therefore, we have:
	\begin{equation}
		\begin{array}{*{20}{l}}
			{\mathbb{E}\left[ {F\left( {{\omega _T}} \right) - F\left( {{\xi _*}} \right)} \right] \leqslant } \\ 
			{{{[1 - \alpha  + \alpha (1 - \mu \lambda )]}^{T{\nu _{\min }}}}\left[ {F\left( {{\omega _0}} \right) - F\left( {{\xi _*}} \right)} \right]} \\ 
			{ + (1 - {{[1 - \alpha  + \alpha (1 - \mu \lambda )]}^{T{\nu _{\min }}}})\mathcal{O}({V_1} + {V_2}).} 
		\end{array}
	\end{equation}
	where $\mathcal{O}({V_1} + {V_2})$ is the additional error, ${(1 - \mu \lambda )^{T{\nu _{\min }}}}$ is the convergence rate, and $\alpha$ controls the trade-off between the convergence rate and {the} additional error caused by {the} variance.
\end{theorem}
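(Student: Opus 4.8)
The plan is to reduce the asynchronous, noisy, multi-epoch recursion to a single geometric contraction on the expected optimality gap $\mathbb{E}[F(\omega_t) - F(\xi_*)]$ and then unroll it over all local steps. First I would establish a one-step descent inequality for a single perturbed local SGD step $\omega' = \omega - \lambda\nabla f(\omega;z) + \mathcal{N}(0,\sigma^2 S^2)$. Applying $\mathcal{L}$-smoothness and taking expectation over both the sampling of $z$ and the zero-mean Gaussian noise, the linear cross terms vanish by unbiasedness of the stochastic gradient and by the noise being mean-zero, while the quadratic term is controlled by the variance bounds $V_1$ and $V_2$ together with the averaged noise variance $\sigma^2 S^2 / m$. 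The hypothesis $\lambda < 1/\mathcal{L}$ forces the coefficient of $\|\nabla F(\omega)\|^2$ to be negative, so I obtain a descent estimate of the form $\mathbb{E}[F(\omega') - F(\xi_*)] \le (F(\omega) - F(\xi_*)) - \frac{\lambda}{2}\|\nabla F(\omega)\|^2 + \mathcal{O}(V_1 + V_2)$.

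Next I would invoke $\mu$-strong convexity in its Polyak-Lojasiewicz form $\|\nabla F(\omega)\|^2 \ge 2\mu(F(\omega) - F(\xi_*))$ to convert the gradient term into a contraction of the gap, producing a per-step recursion $\mathbb{E}[F(\omega') - F(\xi_*)] \le (1-\mu\lambda)(F(\omega) - F(\xi_*)) + \mathcal{O}(V_1 + V_2)$. To incorporate the asynchronous aggregation $\omega_t = \alpha\omega_{t-1} + (1-\alpha)\bar\omega_{new}$, I would use convexity of $F$ to pass $F$ through the $\alpha$-mixing and the $\frac{1}{m}$ average over participating nodes by Jensen's inequality, giving $F(\omega_t) \le \alpha F(\omega_{t-1}) + (1-\alpha)\frac{1}{m}\sum_{k=1}^m F(\omega_{new}^k)$. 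The retained-model branch contributes contraction factor $1$ and the freshly updated branch contributes factor $1-\mu\lambda$ from the previous step, so the convex combination yields precisely the factor $1 - \alpha + \alpha(1-\mu\lambda)$ per local step.

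Finally, since each node runs at least $\nu_{\min}$ local epochs and $0 < 1-\mu\lambda < 1$ makes the contraction monotone in the number of steps, I would unroll the recursion over the $\nu_{\min}$ local steps within each round and then over the $T$ rounds, accumulating the exponent $T\nu_{\min}$. The additive errors then form a geometric series whose partial sum equals $(1 - r^{T\nu_{\min}})/(1-r)$ with $r = 1 - \alpha + \alpha(1-\mu\lambda)$; absorbing the prefactor $1/(1-r) = 1/(\alpha\mu\lambda)$ and the scale $\sigma^2 S^2$ into the $\mathcal{O}(\cdot)$ constant leaves exactly the coefficient $1 - r^{T\nu_{\min}}$ on the $\mathcal{O}(V_1+V_2)$ term, reproducing the claimed bound.

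The hard part will be the bookkeeping of the asynchronous, heterogeneous local-step counts: different nodes may take different numbers of steps and aggregate with staleness, so I must justify that lower-bounding every node by $\nu_{\min}$ and invoking monotonicity of $r^n$ is legitimate, and that the differential-privacy noise injected at every unrolled step accumulates into a single, uniformly bounded error that folds into $\mathcal{O}(V_1+V_2)$. Controlling the cross terms between stale global iterates and the current stochastic gradients is the most delicate point; I would handle it either through a bounded-staleness assumption or, preferably, through the convexity splitting above, which keeps $F$ outside the aggregation and thereby sidesteps explicit delay terms altogether.
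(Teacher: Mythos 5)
Your overall route is the same as the paper's: split the $\alpha$-mixing of the stale and fresh models by convexity of $F$, contract the fresh branch through a smoothness-plus-strong-convexity (PL) descent argument for the perturbed local SGD steps, bound the stochastic-gradient and Gaussian-noise contributions by $V_1$ and $V_2$, and accumulate the additive errors geometrically. The paper's appendix proof is in fact only a one-round sketch of exactly this chain, ending at
\begin{equation*}
\mathbb{E}[F(x_{t+1})-F(\xi_*)] \le \bigl(1-\alpha+\alpha(1-\lambda\mu)^{\nu_{\min}}\bigr)\,[F(x_t)-F(\xi_*)] + \frac{\alpha(V_1+V_2)}{2\mu},
\end{equation*}
without ever unrolling over $T$, so your proposal is considerably more explicit than what the paper actually writes down.

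The one place where your bookkeeping does not go through as described is the claim that the convex combination yields the factor $1-\alpha+\alpha(1-\mu\lambda)$ \emph{per local step}. The $\alpha$-mixing happens once per aggregation round, while the $(1-\mu\lambda)$ contraction happens once per local SGD step; the natural per-round factor is therefore $1-\alpha+\alpha(1-\mu\lambda)^{\nu_{\min}}$, and unrolling gives $\bigl[1-\alpha+\alpha(1-\mu\lambda)^{\nu_{\min}}\bigr]^{T}$. By convexity of $x\mapsto x^{\nu_{\min}}$ this quantity is \emph{at least} $\bigl[1-\alpha+\alpha(1-\mu\lambda)\bigr]^{T\nu_{\min}}$, so the bound you can actually establish is weaker than the displayed theorem, and no monotonicity argument repairs this in the direction you need. (This mismatch is inherited from the paper itself, whose theorem statement does not follow from its own recursion either.) Also note that with the aggregation written as $\omega_t=\alpha\omega_{t-1}+(1-\alpha)\bar\omega_{new}$, Jensen puts the weight $\alpha$ on the \emph{retained} branch, giving $1-(1-\alpha)\mu\lambda$ rather than $1-\alpha\mu\lambda$ per your reading; you (and the paper) implicitly swap the roles of $\alpha$ and $1-\alpha$ to match the stated constant. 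If you present this proof, you should either restate the theorem with the per-round factor $1-\alpha+\alpha(1-\mu\lambda)^{\nu_{\min}}$ raised to the power $T$, or explicitly flag the convexity inequality that separates the two forms.
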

\textbf{Remark:} \textit{When $\alpha  \to 1$, the convergence rate approaches ${(1 - \lambda \mu )^T}^{{\nu _{\min }}}$ and the variance is equal to $\mathcal{O}({V_1} + {V_2})$, i.e., 
\begin{equation}
	\mathbb{E}[F({x_T}) - F({\xi _*})] \leqslant {(1 - \lambda \mu )^{T{\nu _{\min }}}}[F({x_0}) - F({\xi _*})] + O({V_1} + {V_2}).
\end{equation}
When $\alpha  \to 0$, we have $(1 - {[1 - \alpha  + \alpha (1 - \mu \lambda )]^{T{\nu _{\min }}}}) \to 0$ which denotes that  the variance is {also} reduced to $0$. Therefore, as {the iteration number} of the asynchronous model update scheme increases, the variance of {errors} gradually decreases to $0$, which means that this scheme can reach convergence. Detailed proofs can be found in the appendix.}

\subsection{Malicious Node Detection Mechanism}

\begin{figure}[!t]
	\centering
	\includegraphics[width=1\linewidth]{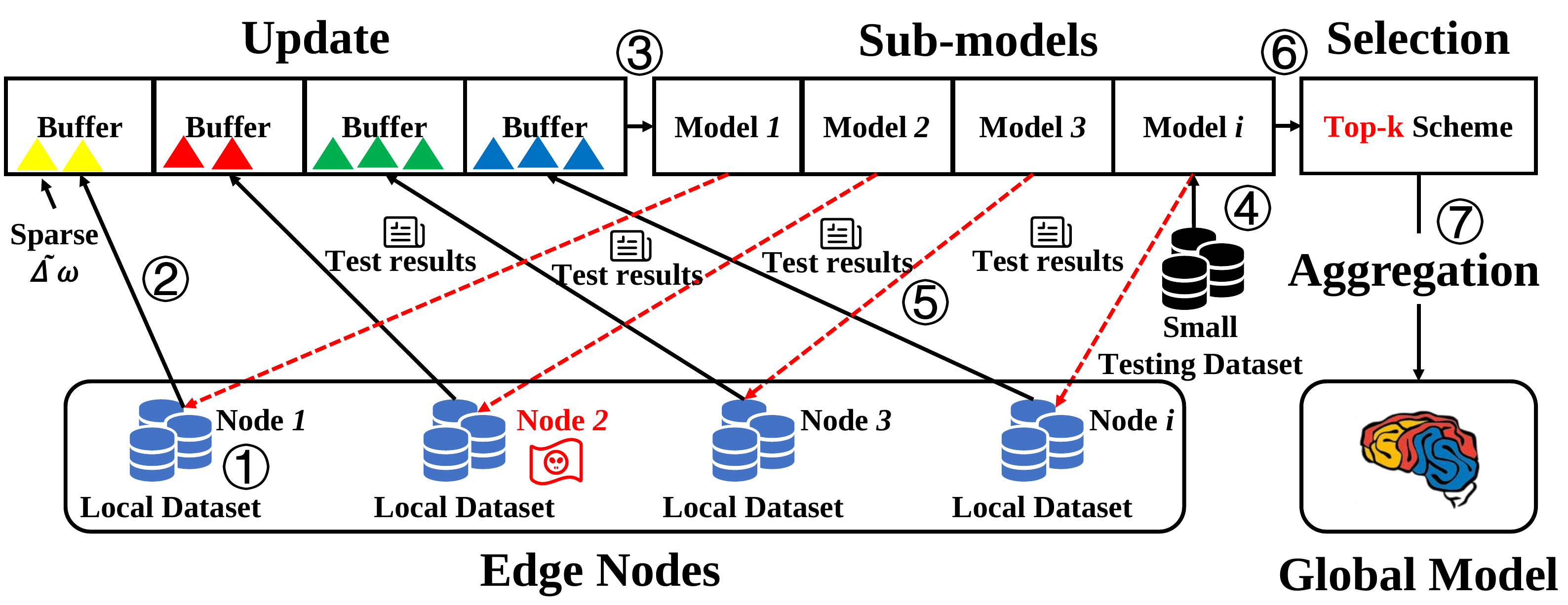}
	\caption{The overview of the malicious node detection scheme.}
	\label{fig-4}
\end{figure}
In this section, to detect malicious edge nodes in the proposed framework, we propose a malicious node detection mechanism, as illustrated in Fig. \ref{fig-4}. Specifically, the cloud uses a specific testing dataset to test the sub-models uploaded by edge nodes to detect malicious nodes. The cloud can create a testing dataset in the cloud when performing global model learning tasks \cite{ref-2,ref-12,ref-22}. {For example, Nguyen \textit{et al.} in \cite{ref-22} proposed an anomaly detection system by creating a testing dataset in the cloud to detect the anomalies for IIoT. Therefore, our solution is practical and feasible in the real world.}

The proposed {detection mechanism} consists of 7 steps, {as shown in Fig. \ref{fig-4}.} Edge nodes use the local dataset to train the {sub-models} (step \textcircled{\scriptsize 1}). The node uploads the perturbed gradients to the buffer (step \textcircled{\scriptsize 2}). Note that the perturbed gradient is generated by {the proposed ALDP} mechanism. The coordinator generates corresponding sub-models based on the order of the nodes (step \textcircled{\scriptsize 3}). The coordinator tests each sub-model using a testing dataset created by the cloud (step \textcircled{\scriptsize 4}). The coordinator sends the test results to each node and marks the malicious node (step \textcircled{\scriptsize 5}). Specifically, the coordinator {decides} normal nodes according to the malicious detection algorithm (step \textcircled{\scriptsize 6}). The pseudo-code of the proposed {malicious detection} algorithm is presented in Algorithm 2. The cloud aggregates the model updates uploaded by all normal nodes to obtain the global model (as shown in step \textcircled{\scriptsize 7}).
As shown in Algorithm 2, the cloud can set the accuracy threshold {${Thr}$} according to the requirements of the task. {The} threshold can be also flexibly adjusted for different tasks.

\begin{algorithm}[!t]\label{al-2}
\caption{{Malicious node detection mechanism.}} 
\begin{algorithmic}[1] 
\REQUIRE ~~\\ 
Edge nodes $\mathcal{K} = \{ {k_1},{k_2},\ldots,{k_i}\} $, their local model $\mathrm{M} = \{ {M_1},{M_2},\ldots,{M_k}\}$, a constant $s$ given by the cloud;\\
\ENSURE ~~\\ 
Normal edge nodes $\mathcal{K}_{norm} \subseteq  \mathcal{K}$ and parameter $\omega$;\\
Initialize parameter $ \omega_{t}$;\\
\FOR{round $t = 1,2,\ldots,n$}
{
		{\STATE Follows  line (3)--(9) of Algorithm 1;}
		\FOR{each Sub-model $M_k \in \mathrm{M}$ \textbf{in parallel}}
		{
			\STATE Each sub-model $M_k$ runs on the testing dataset to obtain the accuracy $A_k$;
			{\STATE Put each  accuracy $A_k$ into the accuracy set $\mathcal{A}$;}
		}
		\ENDFOR
	\STATE	${{Thr}} \leftarrow \mathrm{Top}\,s\%\ \mathrm{of}\ \mathcal{A}$;
		\FOR{$j = 0,1, \ldots ,k$}
		{
			\IF{$A_j \in \mathcal{A}$ $ \wedge $ ${A_j} > {Thr}$}
			{
			\STATE	The coordinator marks the {$j$-th} node as a normal node;
			\STATE	Put this node $k_j$ to the normal node set {$\mathcal{K}_{norm}$};
			}
			\ELSE{
			\STATE	The coordinator marks {$j$-th} node as a malicious node;
			}  	
			\ENDIF
		}
		\ENDFOR
	\STATE	${\omega _{t + 1}} \leftarrow \frac{1}{{|\{\mathcal{K}_{norm}\}|}}\sum\nolimits_{k \in {\mathcal{K}_{norm}}} {\omega _{t + 1}^k}$;
	}
	\ENDFOR
	\RETURN {$\mathcal{K}_{norm}$ and $\omega $}.
\end{algorithmic}
\end{algorithm}

\subsection{{Security Analysis}}
In this section, we provide a comprehensive security analysis of the mitigation measures proposed for these malicious attacks.
Reference \cite{ref-24} shows that the cloud {could acquire} the private data of nodes by {utilizing} the updated gradient information. For example, a malicious cloud can launch a gradient leakage attack by utilizing gradients uploaded by nodes to infer the node's {private} data \cite{ref-12}. The reason is that the updated gradient contains the distribution of local training data, and {an} adversary can use this distribution to restore private training data through reverse engineering.
Therefore, we {introduce} an LDP mechanism to mitigate gradient leakage attacks at the cloud-side. Specifically, {the nodes} only send perturbed gradients by using the {ALDP} mechanism to the cloud and the {perturbed gradients do} not contain the entire distribution of local training data.
{Furthermore, it is difficult for the adversary to collect all the gradient information of a specific node that uses an asynchronous gradient update method.}

The edge nodes implement label-flipping attacks by modifying the labels of the local training dataset, which will generate poisoning model updates and cause the model accuracy to decrease. Therefore, we can use {the accuracy performance} as a metric to measure the quality of local model updates {to mitigate label-flipping attacks.}
We use the accuracy of the top $s\%$ nodes as the baseline. The nodes with an accuracy higher than the baseline are marked as normal nodes, {otherwise they are marked as malicious nodes}. The cloud can choose different hyperparameters $s$ according to different learning tasks, but {$s\%$} is generally greater than {0.5} \cite{ref-16}. The portion of malicious nodes indicates the {intensity} of label-flipping attacks. So we investigate the impact of the proportion of malicious nodes on the performance of the proposed framework. Experimental results show that the proposed framework is relatively robust against label-flipping attacks.

\section{{Experimental Results}}\label{sec-6}
\subsection{Experiment Setup}
In this section, {the proposed framework  is conducted on MNIST and CIFAR-10 datasets for the performance analysis.} 
{All simulations are implemented on the same computing environment (Linux Ubuntu 18.04, Intel i5-4210M CPU, 16GB RAM, and 512GB SSD) with Pytorch and PySyft \cite{ref-25}}.

\textbf{Model:} In this experiment, to build a model that can be easily deployed in edge nodes, we {use} a simple {deep learning} model (i.e., {CNN} with 2 convolutional layers followed by 1 fully connected layer) for classification tasks on the MNIST and CIFAR-10 datasets. 

\textbf{Datasets:} MNIST is a handwritten digital image dataset, which contains 60,000 training samples and 10,000 test samples. Each picture consists of $28 \times 28$ pixels, each pixel is represented by a gray value, and the label is one-hot code: 0-9. The CIFAR-10 dataset consists of 10 types of $32\times 32$ color pictures, a total of 60,000 pictures, each of which contains 6,000 pictures. Among them, 50,000 pictures are used as the training dataset and 10,000 pictures are used as the test dataset. The pixel {values of images} in all datasets are normalized into [0,1]. 

{\textbf{Parameters:} During the {simulations}, the number of edge nodes $K = 10$, {including} 3 malicious edge nodes, the learning rate $\eta = 0.001$, the training {epochs} $E= 1000$, and the mini-batch size $B = 128$.}

{\textbf{Attacks:} Malicious edge nodes {carry out} 100 times label-flipping attacks by changing all labels {`1' to `7'} in the MNIST dataset and changing all classes {`dog' to `cat'} in the  CIFAR-10 dataset. Furthermore, we assume that the cloud is malicious in each {simulation}. To evaluate the performance of the malicious node detection mechanism, we define the attack success rate (ASR) as follows:
\begin{definition}
\textbf{\textit{Attack Success Rate (ASR) \cite{ref-68,ref-69}}} is the percentage of successfully reconstructed training data over the number of training data being attacked.
\end{definition}}

\subsection{Hyperparameter Selection of the Malicious Node Detection Mechanism}
In the context of the malicious node detection mechanism, {the} proper hyperparameter selection, i.e., a threshold of the accuracy baseline of {sub-models} generated by the normal edge nodes, is a notable factor {to determine} the proposed mechanism performance. { {We} investigate the performance of the proposed mechanism with different thresholds and try to find a {optimal} threshold. In particular, we employ {$s \in \{50,60,70,80,90\}$} to adjust the best threshold of the proposed mechanism \cite{ref-46,ref-54}.} {We use {ASR} to indicate the performance of the proposed mechanism and use MNIST and CIFAR-10 datasets to evaluate the proposed mechanism accuracy with the selected threshold.} 

As shown in Fig. \ref{a-1}, {the larger $s$ is, the lower ASR achieves. It implies that the better resistance of the proposed mechanism to malicious attacks with the larger threshold.} Fig. \ref{b-1} shows that the accuracy performances of {the} proposed mechanism with different thresholds. {It is shown} that the highest accuracy {can be achieved} when $s = 80$ {on the two datasets}. {The reason is that when the threshold becomes larger, the more nodes including normal nodes will be filtered out to decrease the accuracy performance.} {Therefore, we choose $s = 80$ as the threshold for the following experiments  to make a  trade-off between ASR and accuracy.}

\begin{figure}[t]
	\centering
	\large
	\subfigure []{\includegraphics[width=0.45\linewidth]{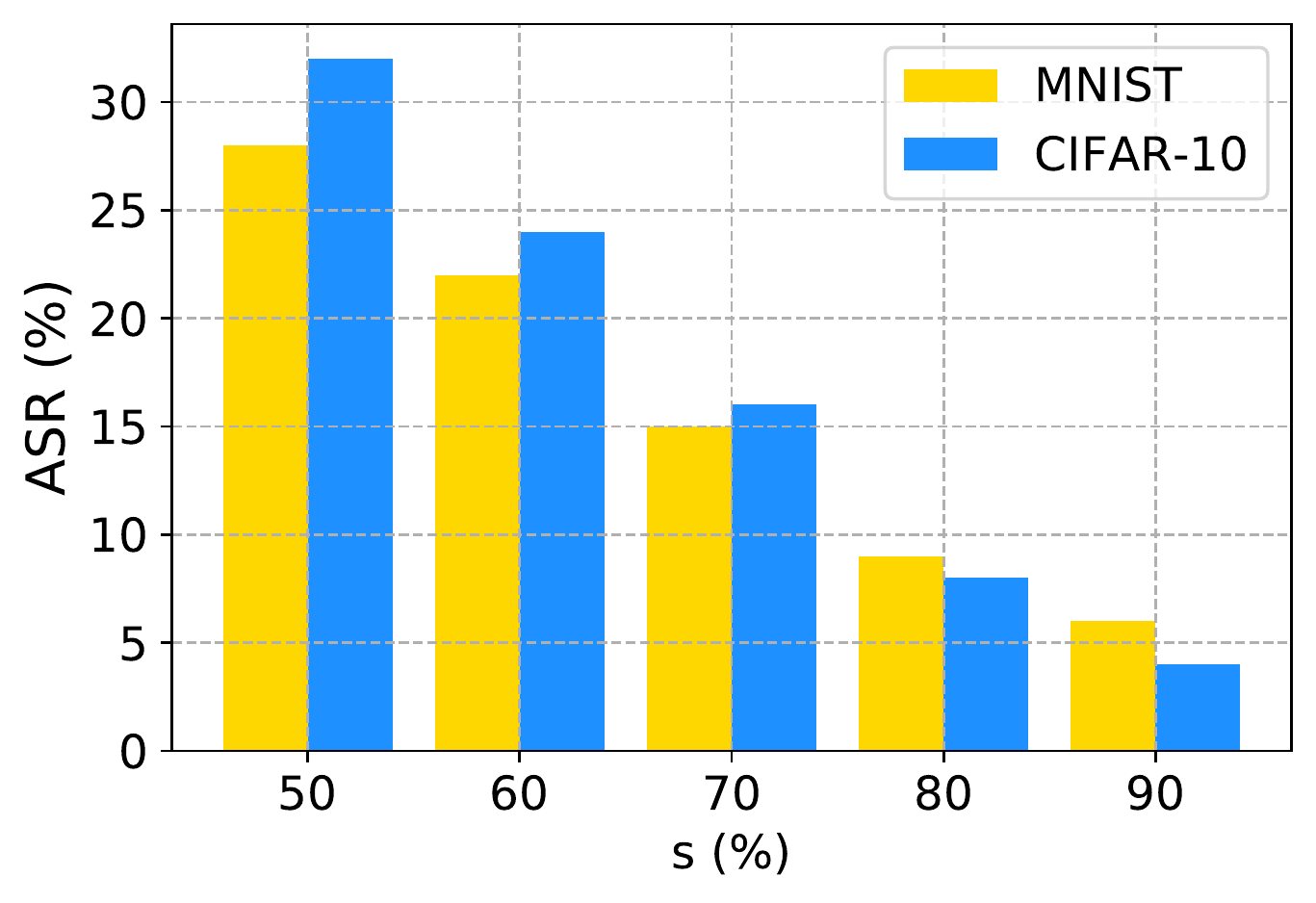}
		\label{a-1}}
	\hfill
	\subfigure[]{	\includegraphics[width=0.45\linewidth]{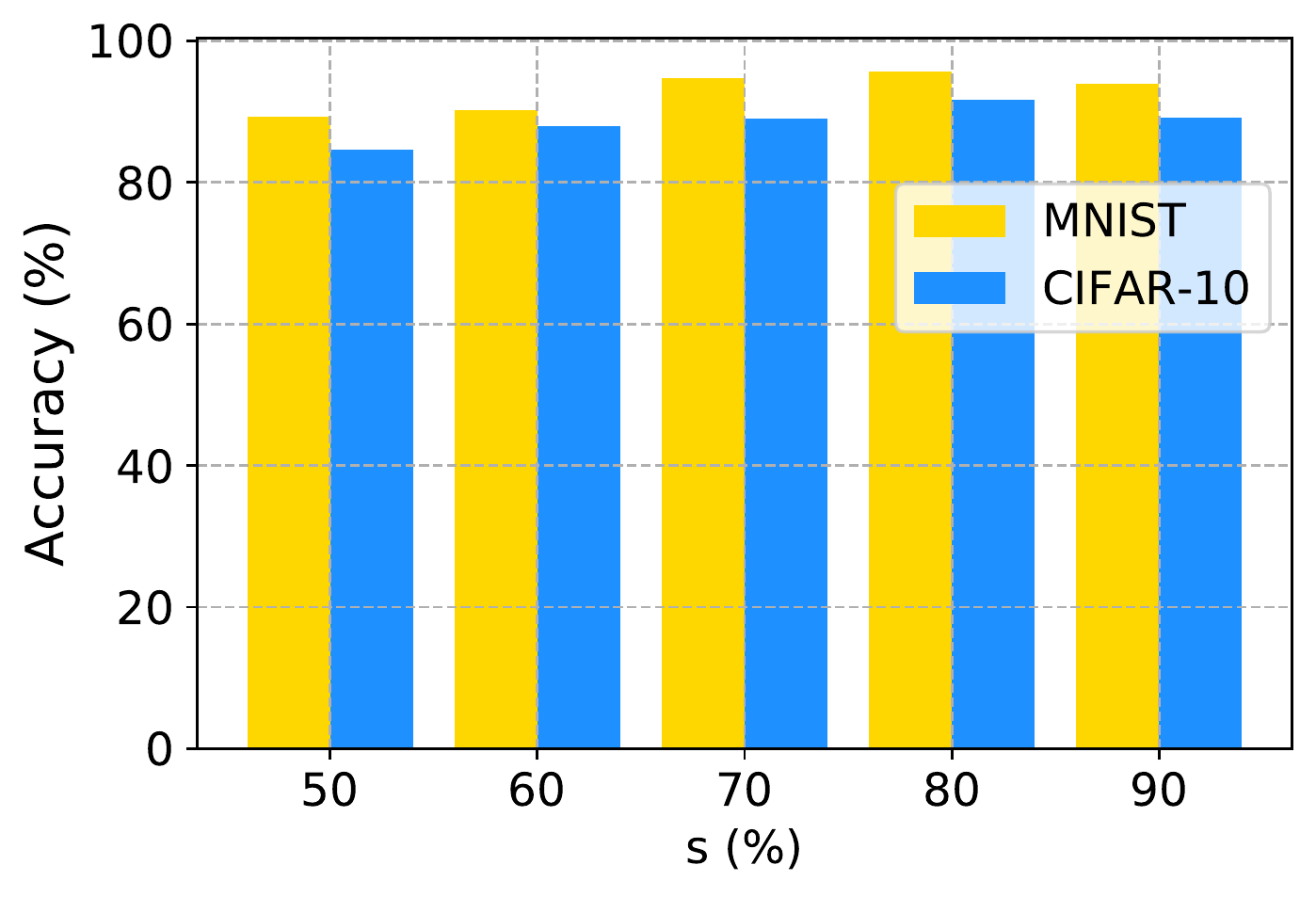}
		\label{b-1}}
	\caption{{The performance} of the proposed malicious node detection mechanism: (a) ASR; (b) Accuracy.}
	\label{fig-5}
\end{figure}
\begin{figure}[t]
	\centering
	\large
	\subfigure []{\includegraphics[width=0.45\linewidth]{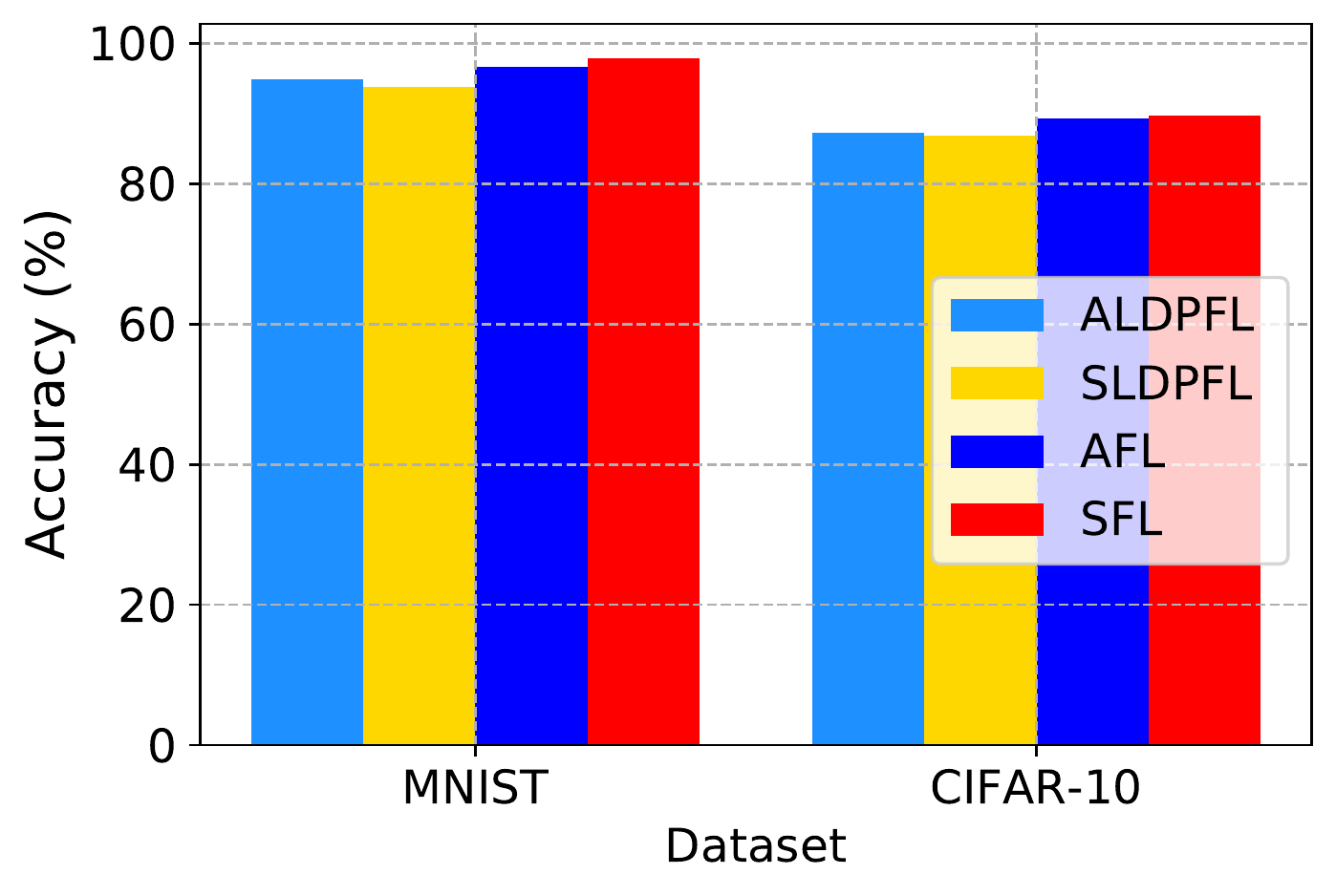}
		\label{a-2}}
	\hfill
	\subfigure[]{	\includegraphics[width=0.45\linewidth]{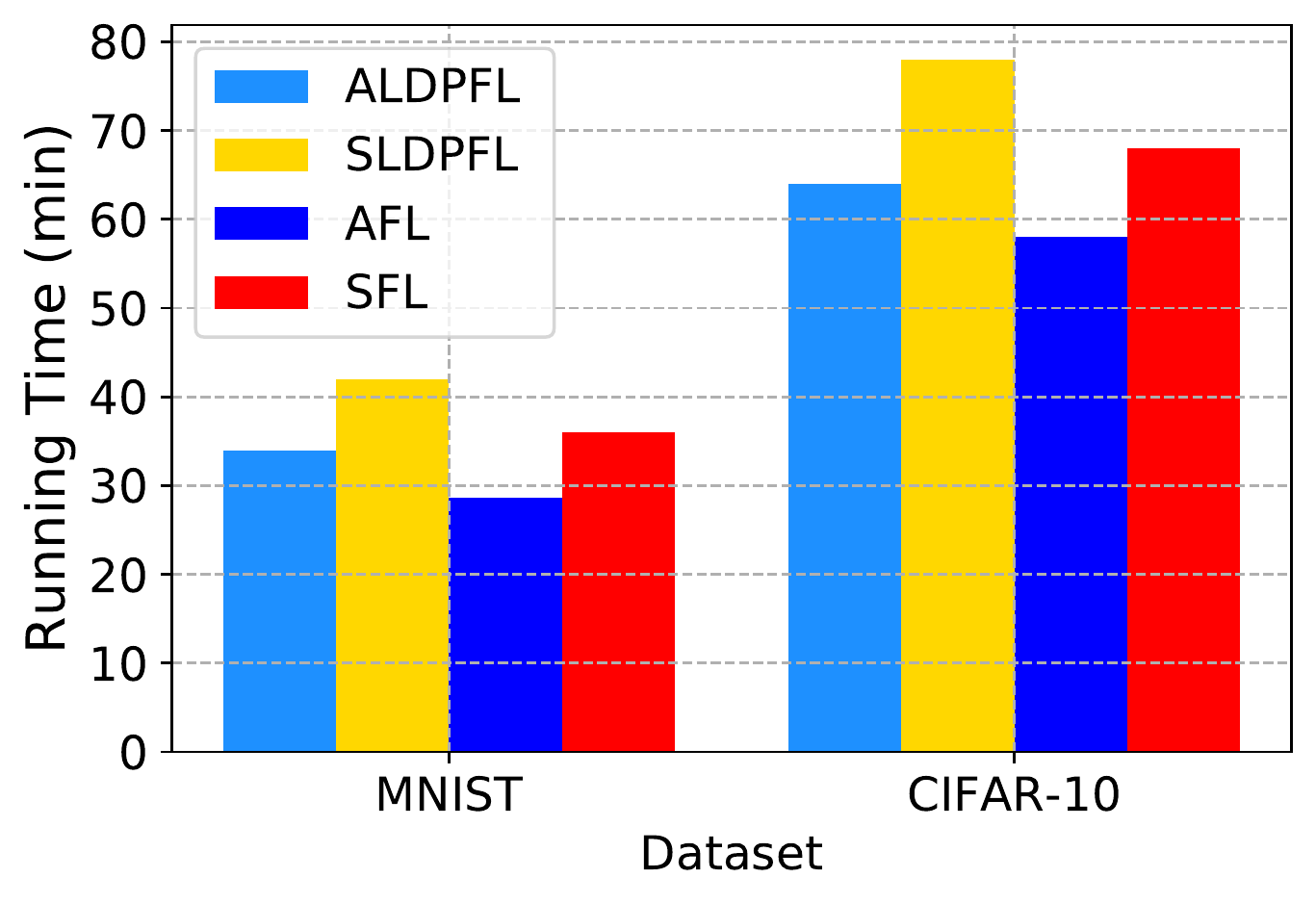}
		\label{b-2}}
	\caption{{The performance} comparison between the proposed model and SLDPFL, AFL, and SFL models: (a) Accuracy; (b) Running Time.}
	\label{fig-6}
\end{figure}

\begin{figure}[t]
	\centering
	\large
	\subfigure []{\includegraphics[width=0.45\linewidth]{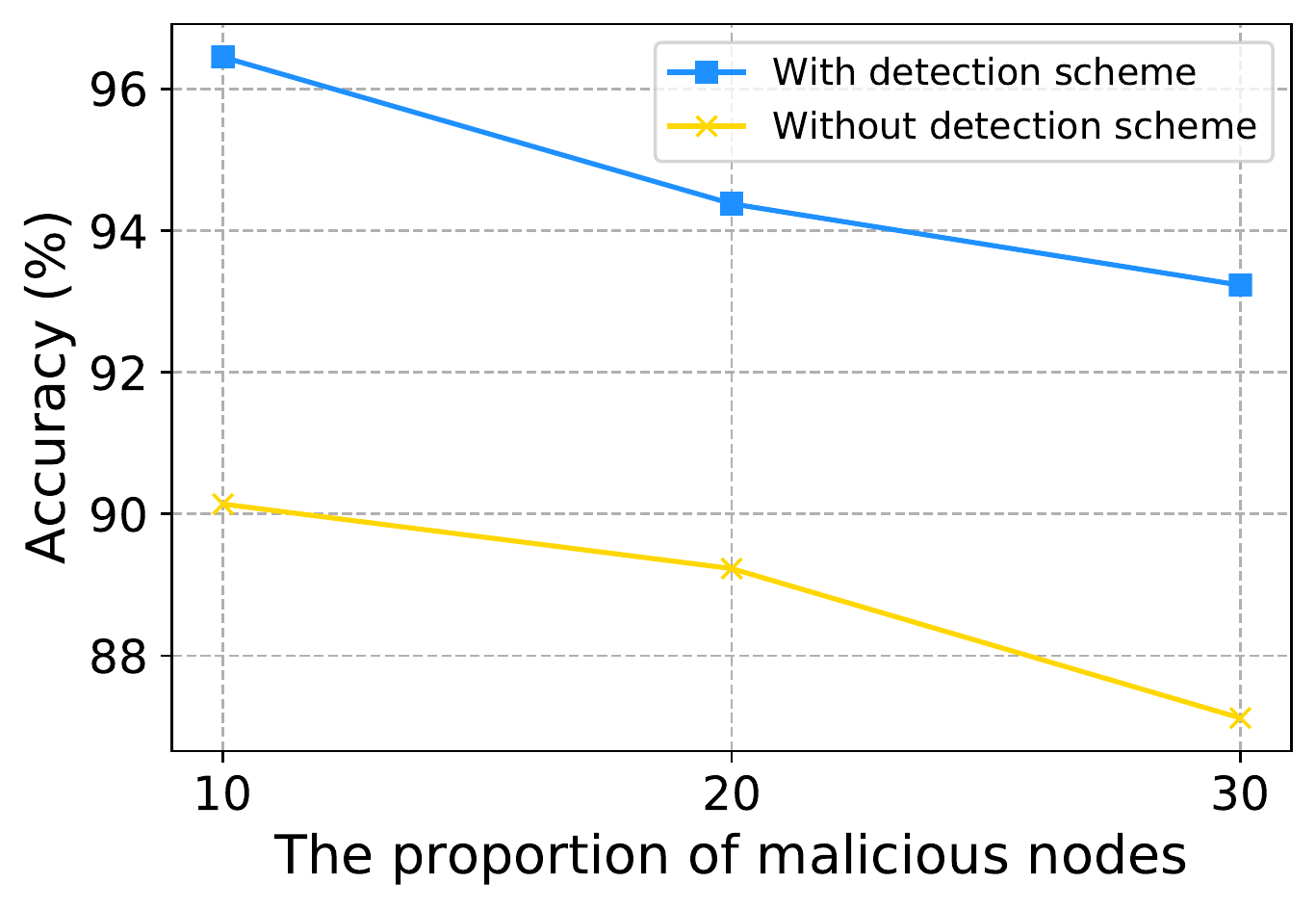}
		\label{a-3}}
	\hfill
	\subfigure[]{	\includegraphics[width=0.45\linewidth]{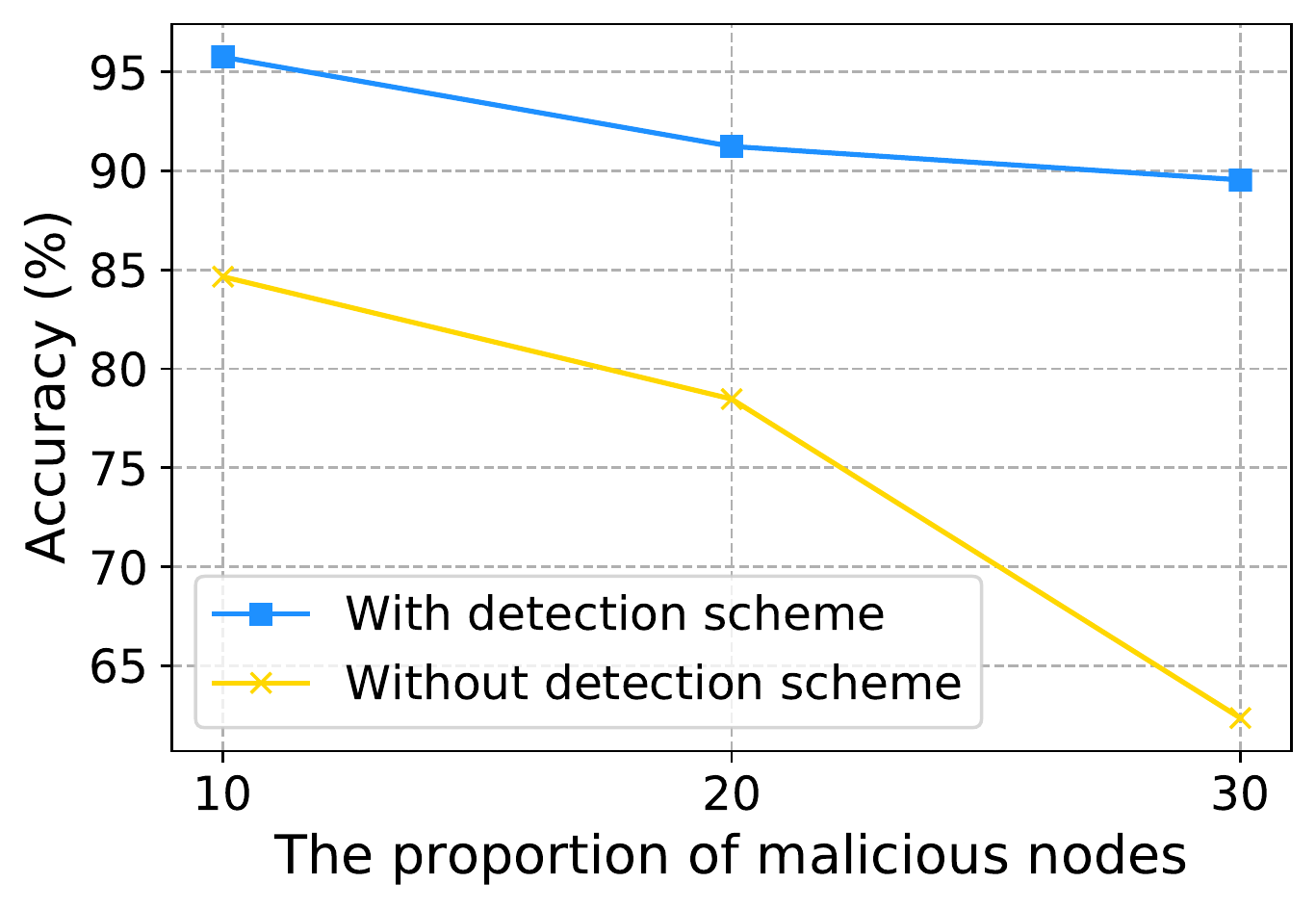}
		\label{b-3}}
	\caption{{The performance} comparison between ALDPFL model with detection scheme and without detection scheme one on MNIST dataset: (a) General task; (b) Special task.}
	\label{fig-7}
\end{figure}
\subsection{{Performance of the Proposed Framework}}
We {further compare} the performance of the asynchronous local differentially private FEL (ALDPFL) framework {with} that of synchronous local differentially private FEL (SLDPFL) \cite{ref-16}, asynchronous FEL (AFL) \cite{ref-19}, and synchronous FEL (SFL) \cite{ref-25} methods with an identical simulation {settings}. In accordance to \cite{ref-40}, we fixed $\varepsilon = 8, \delta = 10^{-3}$, and {the communication cost for each communication round}. Fig. \ref{a-2} shows the performance comparison {of} ALDPFL, SLDPFL, AFL, and SFL {models} for classification tasks on MNIST and CIFAR-10 dataset. We can find that the accuracy of ALDPFL model is close to those of the AFL {or} SFL models. {It means} that ALDPFL sacrifices little accuracy {but} {provides the} node-level privacy protection with {the LDP mechanism.} {Fig. \ref{b-2} shows that the running time of ALDPFL is close to that of SFL. In addition, it is longer than that of AFL but lower than that of SLDPFL.}
{This is because an asynchronous model update method can improve communication efficiency by reducing nodes' computation time.} Therefore, ALDPFL model not only achieves {the} accuracy and communication efficiency comparable to {those} of AFL {model}, but also provides a strong privacy guarantee.

\subsection{{Performance of Mitigating Label-flipping Attacks}}\label{ex-3}
{We} assume that a proportion $p \in \{10\%,20\%,30\%\}$ of malicious {edge} nodes exist in the edge nodes. Malicious edge nodes implement label-flipping attacks by changing all labels `1' to `7' in {their} local MNIST datasets. In this experiment, we consider two {accuracy evaluation} tasks: the  {general} task and the {special task}. {For the general task,} we compare the accuracy performances of ALDPFL model {with/without} the detection scheme on MNIST dataset. In Fig. \ref{a-3}, {it can be observed that the robustness of the proposed framework becomes more evident than the conventional one without detection as the proportion $p$ of malicious nodes increases.} {This suggests} that the proposed framework uses the malicious node detection mechanism to detect malicious nodes, thereby preventing malicious nodes from participating in {the global model training.} {For the special task, we compare the classification accuracy of ALDPFL model in the cases of with/without the detection scheme on a given digit in MNIST dataset, e.g., the digit `1'. Fig. \ref{b-3} illustrates that the accuracy of ALDPFL framework without the detection mechanism decreases more rapidly than that with the detection mechanism. In contrast, the proposed framework is robust for the special task despite the increasing proportion $p$ of malicious nodes. In summary, the proposed framework can use the malicious node mechanism to mitigate label-flipping attacks.}

\section{Conclusion}\label{sec-7}
In this paper, we propose a communication-efficient and secure asynchronous FEL framework for edge computing in IIoT. We present an asynchronous model update method to improve communication efficiency in the proposed framework. In this method, edge nodes can upload gradients at any time without waiting for other nodes, and the cloud can aggregate gradients at any time, so it can improve communication efficiency by reducing nodes' computation time. ALDP mechanism is introduced to mitigate gradient leakage attacks where the malicious cloud in IIoT can not restore the raw gradient distribution from the perturbed gradients. We also mathematically prove that ALDPM can achieve convergence. Additionally, a malicious node detection mechanism is designed to detect malicious nodes in the proposed framework to mitigate label-flipping attacks. {The cloud-only needs to use a testing dataset to test the quality of each local model to achieve malicious node detection. Experimental results show that the proposed framework can provide node-level privacy protection and negligibly impact the accuracy of the global model.}

{In the future, our work will mainly focus on the following two aspects: (1) Communication efficiency: To combine communication-efficient technology with cryptography, we plan to develop a symmetric gradient quantization technology to improve the communication efficiency of the system and meet the requirements of cryptography. (2) Attack resistance: To deal with complex attack scenarios, we plan to design an anomaly detection algorithm based on neural networks to improve the robustness of the detection algorithm.}



\bibliographystyle{ACM-Reference-Format}
\bibliography{reference}

\appendix

\section{Proofs}

\subsection{Proof of Theorems}
\begin{theorem}
Let $F$ be the global loss function where $F$ is {$\mathcal{L}$-smooth and $\mu $-strongly convex}. We assume that each node {performs} local model training at least $\nu \min $ epochs before submitting their perturbed gradients to the cloud. {For} $\forall x \in {{\mathbb {R}}^d},\forall z \sim {D_i}$,  {$i \in \mathbb{N}$}, we have {$\mathbb{E}\|\nabla f(x ; z)-\nabla F(x)\|^{2} \leq V_{1}$, and $\mathbb{E}\left[\|\nabla f(x ; z)\|^{2}\right] \leq V_{2}$ where $V_1$ and $V_2$ are constants}. Let $\lambda  < \frac{1}{\mathcal{L}}$, and ${\omega _{new}}$ and ${\xi _*} \in \chi $ denote {the} delayed model update and a global optimum at the moment after $T$ training epochs, respectively. Therefore, we have:
\begin{equation}
		\begin{array}{*{20}{l}}
			{\mathbb{E}\left[ {F\left( {{\omega _T}} \right) - F\left( {{\xi _*}} \right)} \right] \leqslant } \\ 
			{{{[1 - \alpha  + \alpha (1 - \mu \lambda )]}^{T{\nu _{\min }}}}\left[ {F\left( {{\omega _0}} \right) - F\left( {{\xi _*}} \right)} \right]} \\ 
			{ + (1 - {{[1 - \alpha  + \alpha (1 - \mu \lambda )]}^{T{\nu _{\min }}}})\mathcal{O}({V_1} + {V_2}).} 
		\end{array}
\end{equation}
where $\mathcal{O}({V_1} + {V_2})$ is the additional error, ${(1 - \mu \lambda )^{T{\nu _{\min }}}}$ is the convergence rate, and $\alpha$ controls the trade-off between the convergence rate and {the} additional error caused by {the} variance.
\end{theorem}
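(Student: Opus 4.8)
The plan is to reduce the entire statement to a single affine contraction of the form $\mathbb{E}[a_{s+1}]\le c\,a_s+e$ for the per-epoch suboptimality gap $a_s:=\mathbb{E}[F(\omega_s)-F(\xi_*)]$, with contraction factor $c=1-\alpha+\alpha(1-\mu\lambda)=1-\alpha\mu\lambda$ and a constant additive error $e$, and then to unroll it over the $N:=T\nu_{\min}$ total local epochs. First I would establish the one-step descent bound for a single stochastic update $\omega^+=\omega-\lambda g$ with $\mathbb{E}[g]=\nabla F(\omega)$. Applying the $\mathcal{L}$-smoothness inequality to $F(\omega^+)$, taking expectations, and using $\mathbb{E}\|g\|^2\le V_2$ together with $\mathbb{E}\|g-\nabla F(\omega)\|^2\le V_1$ controls the linear term and the quadratic remainder; the hypothesis $\lambda<1/\mathcal{L}$ is exactly what renders the $\tfrac{\mathcal{L}\lambda^2}{2}\|g\|^2$ remainder subdominant. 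Invoking $\mu$-strong convexity in its gradient-domination (Polyak--{\L}ojasiewicz) form $\|\nabla F(\omega)\|^2\ge 2\mu\,(F(\omega)-F(\xi_*))$ then converts this into a contraction $\mathbb{E}[F(\omega^+)-F(\xi_*)]\le(1-\mu\lambda)(F(\omega)-F(\xi_*))+\mathcal{O}(\lambda^2(V_1+V_2))$, the precise multiple of $\mu\lambda$ being fixed by how the gradient term is split.

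Next I would fold in the asynchronous mixing. Writing the global iterate as a convex combination of the retained (delayed) model and the freshly descended model and applying Jensen's inequality to the convex $F$ splits the gap as $F(\omega_{s+1})-F(\xi_*)\le(1-\alpha)(F(\omega_s)-F(\xi_*))+\alpha(F(\omega^+)-F(\xi_*))$; substituting the one-step bound on the descended component yields precisely the per-epoch recursion with factor $c=1-\alpha+\alpha(1-\mu\lambda)$ and additive error proportional to $\alpha\,\mathcal{O}(\lambda^2(V_1+V_2))$. The Gaussian perturbation $\mathcal{N}(0,\sigma^2 S^2)$ injected by the ALDP mechanism is zero-mean and independent, so it keeps the descent direction unbiased and only inflates the variance by $\tfrac{\sigma^2 S^2}{m}$ after averaging over the $m$ nodes; since $\sigma,S,m$ are fixed constants this is absorbed into the $\mathcal{O}(V_1+V_2)$ error. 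Because the stated moment bounds are taken against the global $F$ rather than the local $F_k$, the usual client-drift discrepancy is assumed away, so the same recursion applies verbatim at each of the $\nu_{\min}$ local epochs within each of the $T$ rounds.

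Finally I would unroll over all $N=T\nu_{\min}$ epochs: iterating $a_{s+1}\le c\,a_s+e$ gives $a_N\le c^{N}a_0+e\sum_{j=0}^{N-1}c^{j}=c^{N}a_0+\tfrac{1-c^{N}}{1-c}\,e$. Since $1-c=\alpha\mu\lambda$ and the per-epoch error $e$ carries a matching factor of $\alpha$, the ratio $\tfrac{e}{1-c}$ is $\alpha$-independent and collapses the residual learning-rate constants into $\mathcal{O}(V_1+V_2)$, producing exactly the claimed bound $\mathbb{E}[F(\omega_T)-F(\xi_*)]\le c^{T\nu_{\min}}[F(\omega_0)-F(\xi_*)]+(1-c^{T\nu_{\min}})\,\mathcal{O}(V_1+V_2)$ with $c=1-\alpha+\alpha(1-\mu\lambda)$. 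The two limits in the Remark then drop out immediately: $\alpha\to1$ sends $c\to1-\mu\lambda$, while $\alpha\to0$ sends $c\to1$ so the variance prefactor $1-c^{T\nu_{\min}}\to0$.

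I expect the main obstacle to be the honest bookkeeping of the additive error rather than any single inequality: one must verify that the stochastic-gradient variance, the DP noise $\sigma^2 S^2$, and the smoothness remainder all remain bounded by a single $\alpha$-independent constant so that the geometric sum telescopes cleanly into $(1-c^{T\nu_{\min}})\,\mathcal{O}(V_1+V_2)$, and that the mixing weight is attached to the descended component so that the contraction is $1-\alpha\mu\lambda$ and not $1-(1-\alpha)\mu\lambda$. Pinning down this placement is what simultaneously reconciles the recursion with the stated inequality and with the limiting behavior asserted in the Remark.
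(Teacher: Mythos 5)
Your route is essentially the paper's: the appendix proof is exactly the per-round chain you describe --- split $\mathbb{E}[F(x_{t+1})-F(\xi_*)]$ by convexity into $(1-\alpha)[F(x_t)-F(\xi_*)]+\alpha\mathbb{E}[F(x_\tau)-F(\xi_*)]$, bound the fresh branch by a local-SGD contraction to get the factor $1-\alpha+\alpha(1-\lambda\mu)^{\nu_{\min}}$ plus an error $\frac{\alpha\nu_{\max}\lambda V_1}{2}+\frac{\alpha}{2\mu}(1-\lambda\mu)^{\nu_{\max}}V_2\le\frac{\alpha(V_1+V_2)}{2\mu}$ --- except the paper stops there and never derives the local contraction or unrolls the recursion, both of which you supply. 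The one point to correct in your write-up is the claim that the recursion ``applies verbatim at each of the $\nu_{\min}$ local epochs'': the $\alpha$-mixing happens only once per communication round, so the honest per-round factor is $1-\alpha+\alpha(1-\mu\lambda)^{\nu_{\min}}$, which after $T$ rounds gives $[1-\alpha+\alpha(1-\mu\lambda)^{\nu_{\min}}]^{T}$ rather than the strictly smaller $[1-\alpha+\alpha(1-\mu\lambda)]^{T\nu_{\min}}$ asserted in the theorem (a discrepancy the paper itself carries between its proof and its statement, alongside the swapped placement of $\alpha$ between the update rule and the Jensen step, which you correctly flagged).
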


\begin{proof}
	On the cloud side, we have ${x_{t + 1}} = \alpha {x_t} + (1 - \alpha )\frac{1}{m}\bigg(\sum\limits_{k = 1}^m {((\Delta x_t^k} /\zeta ) + N(0,{\sigma ^2}{S^2}))\bigg)$. Thus, we have 
	\begin{equation}
		\begin{gathered}
			\mathbb{E}[F({x_{t + 1}}) - F({\xi _*})] \hfill \\
			\leqslant (1 - \alpha )[F({x_t}) - F({\xi _*})] + \alpha \mathbb{E}[F({x_\tau }) - F({\xi _*})] \hfill \\
			\leqslant (1 - \alpha  + \alpha {(1 - \lambda \mu )^{{\nu _{\min }}}})[F({x_t}) - F({\xi _*})] + \frac{{\alpha {\nu _{\max }}\lambda {V_1}}}{2} + \frac{\alpha }{{2\mu }}{(1 - \lambda \mu )^{{\nu _{\max }}}}{V_2} \hfill \\
			\leqslant (1 - \alpha  + \alpha {(1 - \lambda \mu )^{{\nu _{\min }}}})[F({x_t}) - F({\xi _*})] + \frac{{\alpha ({V_1} + {V_2})}}{{2\mu }}. \hfill \\ 
		\end{gathered} 
	\end{equation}
\end{proof}

\end{document}